\newtheorem{definition}{Definition}[section]\usepackage{lineno}
\theoremstyle{plain}
\def\eqref#1{equation~\ref{#1}}
\def\1{\bm{1}}
\DeclareMathAlphabet{\mathsfit}{\encodingdefault}{\sfdefault}{m}{sl}
\SetMathAlphabet{\mathsfit}{bold}{\encodingdefault}{\sfdefault}{bx}{n}
\newtheorem{theorem}{Theorem}[section]
\newtheorem{proposition}[theorem]{Proposition}
\DeclarePairedDelimiterX{\infdivent}[1]{[}{]}{%
  #1%
}
\DeclarePairedDelimiterX{\infdivx}[2]{[}{]}{%
  #1\;\delimsize\|\;#2%
}
\DeclarePairedDelimiterX{\infdivxsemicolon}[2]{[}{]}{%
  #1\delimsize;#2%
}
\newenvironment{manualprop}[1]{%
  \manualpropinner
}{\endmanualpropinner}
\algrenewcommand\algorithmicrequire{\textbf{Input:}}
\algrenewcommand\algorithmicensure{\textbf{Output:}}
\let\underbrace\LaTeXunderbrace
\begin{document}

\runningauthor{He$^*$, Chen$^*$, Zhang$^*$, Barber, Hernández-Lobato}

\twocolumn[

\aistatstitle{Training Neural Samplers with Reverse Diffusive KL Divergence}

\aistatsauthor{Jiajun He$^{*,1}$  \And Wenlin Chen$^{*,1,2}$ \And  Mingtian Zhang$^{*,3}$ 
} \vspace{6pt}
\aistatsauthor{
David Barber$^{3}$ \And  José Miguel Hernández-Lobato$^{1}$ \vspace{8pt}}
\aistatsaddress{$^*$Equal contribution 
 }\vspace{-15pt}
\aistatsaddress{$^1$University of Cambridge \And  $^2$MPI for Intelligent Systems \And $^3$University College London}\vspace{-15pt}
\aistatsaddress{ \texttt{jh2383@cam.ac.uk\ \ \ \ \ \  wc337@cam.ac.uk\ \ \ \ \ \ m.zhang@cs.ucl.ac.uk}}]

\begin{abstract}
Training generative models to sample from unnormalized density functions is an important and challenging task in machine learning. 
Traditional training methods often rely on the reverse Kullback-Leibler (KL) divergence due to its tractability. 
However, the mode-seeking behavior of reverse KL hinders effective approximation of multi-modal target distributions. 
To address this, we propose to minimize the reverse KL along diffusion trajectories of both model and target densities. 
We refer to this objective as the \emph{reverse diffusive KL divergence}, which allows the model to capture multiple modes. 
Leveraging this objective, we train neural samplers that can efficiently generate samples from the target distribution in \emph{one step}.
We demonstrate that our method enhances sampling performance across various Boltzmann distributions, including both synthetic multi-modal densities and $n$-body particle systems.

\end{abstract}

\section{INTRODUCTION}
Sampling from unnormalized distributions is an essential and challenging research problem with wide applications in machine learning, Bayesian inference, and statistical mechanics.
Consider a target distribution with an analytical but unnormalized density function:
\begin{align}
     p_d(x)=\exp(-E(x))/Z,
\end{align}
where $x$ is the random variable to be sampled, $E:\mathbb{R}^n\to\mathbb{R}$ is a lower-bounded differentiable energy function, and $Z=\int \exp(-E(x))dx$ is the intractable normalization constant. 
A common approach to sampling from $p_d(x)$ involves designing MCMC samplers~\citep{neal2011mcmc, chen2024diffusive}. 
However, for high-dimensional, multi-modal target distributions, MCMC methods often take a long time to converge and need to simulate a very long chain for the samples to be uncorrelated~\citep{pompe2020framework}. 
This presents significant challenges in large-scale simulation problems.

Alternatively, one can approximate the target distribution $p_d(x)$ with a generative model $p_\theta(x)$, such as a normalizing flow or a latent variable model $p_\theta(x)\coloneqq\int p_\theta(x|z)p(z)dz$, which is easier to sample from.
This model is often referred to as a \emph{neural sampler} \citep{levy2017generalizing,wu2020stochastic, arbel2021annealed,di2021neural}.
Training a neural sampler involves learning the model parameters $\theta$, which is usually achieved by minimizing a divergence between $p_\theta(x)$ and $p_d(x)$.
A common choice to train the neural sampler is the reverse KL divergence due to its tractability.

However, the most significant limitation of reverse KL is the \emph{mode collapse} phenomenon due to its mode-seeking behavior~\citep{bishop2006pattern}. 
This means that when the target distribution $p_d(x)$ contains multiple distant modes, the model $p_{\theta}(x)$ trained by reverse KL will underestimate the variance of $p_d(x)$ and can only capture a few modes. 
This is undesirable since the target distributions, such as the Bayesian posteriors~\citep{welling2011bayesian} and Boltzmann distributions~\citep{noe2019boltzmann}, often exhibit multiple modes. 
\par
In this paper, we propose to use an alternative objective, the \emph{diffusive KL divergence} (DiKL). 
This objective convolves both the target and the model distributions with Gaussian diffusion kernels, allowing for better connectivity and merging of distant modes in the noisy space. 
Notably, DiKL is still a valid divergence between the model density and the \emph{original} target distribution, which allows us to learn the original target with better mass-covering capability.
We further introduce a tractable gradient estimator for reverse DiKL, enabling practical training of neural samplers with this divergence.
We demonstrate the effectiveness of our approach on both synthetic and $n$-body system targets, where it matches previous state-of-the-art models with reduced training and sampling costs.

\section{BACKGROUND: KL DIVERGENCE}\label{background:rkl}
Given access to target samples $ \{x_1, \dots, x_N\} \sim p_d(x)$, it is common to minimize the forward KL divergence to fit a generative model $p_\theta(x)$ to the target $p_d(x)$:
\begin{align}
    \mathrm{KL}(p_d || p_\theta) \approx -\frac{1}{N} \sum_{n=1}^N \log p_\theta(x_n)+\text{const.},
\end{align}
which is equivalent to maximum likelihood estimation (MLE).
However, our setting only assumes access to the unnormalized density of $p_d(x)$ without samples, where the reverse KL divergence is typically employed:
\begin{align}
    &\mathrm{KL}(p_\theta||p_d)=\int (\log p_\theta(x)-\log p_d(x) )p_\theta(x)dx\nonumber\\
    &\qquad =\int (\log p_\theta(x)+E(x))p_\theta(x)dx+\log Z,\label{eq:rkl}
\end{align}
where $\log Z$ is a constant independent of $x$. 
The integration over $p_\theta(x)$ can be approximated by the Monte Carlo method with samples from $p_\theta(x)$, and the gradient of the reverse KL w.r.t. the model parameter $\theta$ can be obtained by auto differentiation with the reparameterization trick~\citep{kingma2013auto}. 
This objective is particularly suitable for models with analytically tractable marginal densities, such as normalizing flows~\citep{papamakarios2019normalizing, rezende2020normalizing,dinh2016density,kingma2018glow}.

For other models like a latent variable model
$p_\theta(x)=\int p_\theta(x|z)p(z)dz$, the log marginal $\log p_\theta(x)$ is typically intractable.
Instead, one can derive a tractable upper bound of the reverse KL~\citep{zhang2019variational}: 
\begin{align}
    \mathrm{KL}(p_\theta||p_d)\leq \mathrm{KL}(p_\theta(x|z)p(z)||q_\phi(z|x)p_d(x)),\label{eq:upper_bound_rkl}
\end{align}
where $q_\phi(z|x)$ is a learnable variational distribution. This reverse KL upper bound contrasts with the more commonly studied forward KL upper bound, as discussed in~\citet{wainwright2008graphical,kingma2013auto}.
While this variational approach circumvents the intractability of $\log p_{\theta}(x)$, it introduces its own challenges and limitations, such as the limited flexibility of the variational family and the potential looseness of the variational bound.

Alternatively, \citet{li2017gradient,shi2017kernel,song2020sliced,luo2024entropy}~directly derive the analytical form of the gradient of the negative entropy for the reverse KL divergence in \Cref{eq:rkl}:
\begin{multline}
    \nabla_\theta \int \log p_\theta(x)p_\theta(x)dx =\int p_\theta(x) \nabla_x\log p_\theta(x)\frac{\partial x}{\partial\theta}dx,\label{eq:neg_entropy_grad}
\end{multline}
where the score function $\nabla_x\log p_\theta(x)$ of the model can be approximated by training a score network using the model samples with score matching~\citep{hyvarinen2005estimation}. This enables fitting latent variable models to unnormalized target densities by reverse KL minimization without any variational approximation. However, the mode-seeking behavior of reverse KL typically results in the trained model $p_{\theta}(x)$ collapsing to a small number of modes in a multi-modal target distribution.

\section{DIFFUSIVE KL DIVERGENCE}
One effective way to bridge and merge isolated modes is Gaussian convolution, which has been successfully used in training diffusion models~\citep{sohl2015deep,song2021score,ho2020denoising} and encouraging exploration of samplers~\citep{lee2021structured,RDMC,chen2024diffusive}.
It can also potentially reduce the number of modes due to the fact that Gaussian convolution effectively convexifies any functions to its convex envelope~\citep{mobahi2015link}. In \Cref{fig:conv}, we provide a toy visualization showing that by increasing the variance of the Gaussian convolution, we can bridge modes or even reduce the number of modes in the original multi-modal distribution.

\begin{figure}
    \centering
\includegraphics[width=\linewidth]{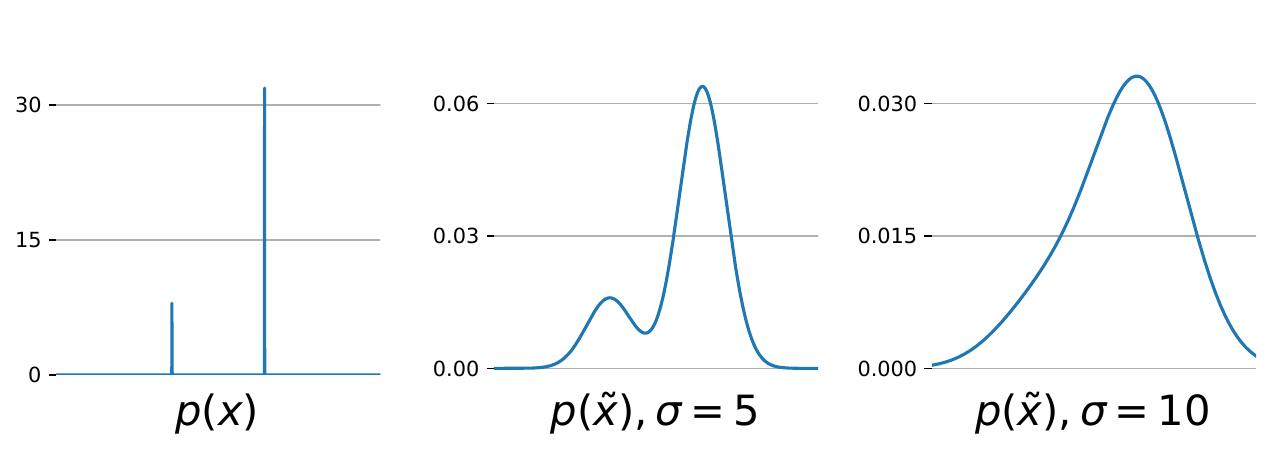}
    \caption{We convolve a Gaussian kernel $\mathcal{N}(\tilde{x}|x,\sigma^2)$ with $\sigma\in\{5,10\}$ to the original distribution $p(x)$. This demonstrates that Gaussian convolution can bridge modes and even reduce the number of modes as the variance of the Gaussian increases.}
    \label{fig:conv}
\end{figure}

To construct a valid divergence that leverages Gaussian convolutions, one can convolve two distributions $p(x)$ and $q(x)$ with the same Gaussian kernel $k(\tilde{x}|x)=\mathcal{N}(\tilde{x}|\alpha x, \sigma^2I)$ and then define the  KL divergence between the convolved distributions $\tilde{p}(\tilde{x})=\int k(\tilde{x}|x)p(x)dx$ and $\tilde{q}(\tilde{x})=\int k(\tilde{x}|x)q(x)dx$. This type of divergence construction is known as the spread divergence~\citep{zhang2020spread}. 
\begin{definition}[Spread KL Divergence]
\begin{align}
    &\mathrm{SKL}_k(p||q)\equiv \mathrm{KL}( \tilde{p}|| \tilde{q})=\mathrm{KL}( p*k|| q*k),
\end{align}
where $*$ denotes the convolution operator:
$\tilde{\pi}\equiv\pi*k\equiv \int k(\tilde{x}|x) \pi(x) dx$.
\end{definition}
The spread KL divergence is a theoretically well-defined divergence as $\mathrm{SKL}_k(p||q)=0\Leftrightarrow p=q$ for any Gaussian kernel $k$, as shown in~\citet{zhang2020spread}. In practice, the choice of 
$k$ is crucial for model training, and selecting the optimal kernel is a challenging problem. Inspired by the recent success of diffusion models, instead of selecting one $k$, one can use a sequence of Gaussian kernels with different lengthscales to construct a ``multi-level spread KL divergence'', which we refer to as \emph{diffusive KL divergence} (DiKL).
\begin{definition}[Diffusive KL Divergence]
\begin{align}
    \mathrm{DiKL}_{\mathcal{K}}(p||q) \equiv \sum_{t=1}^T w(t)\mathrm{KL}(p*k_t||q*k_t),
\end{align}
where $w(t)$ is a positive scalar weighting function and $\mathcal{K}=\{k_1,\cdots, k_T\}$ is a set of (scaled) Gaussian convolution kernels denoted as $k_t(x_t|x)=\mathcal{N}(x_t|\alpha_t x, \sigma_t^2I)$. 
\end{definition}
Since the DiKL can be seen as an average of multiple spread KL divergence with different Gaussian kernels, it is straightforward to show that it is a valid divergence, i.e., $\mathrm{DiKL}_{\mathcal{K}}(p||q)=0\Leftrightarrow p=q$. 

The DiKL has been successfully applied to some important applications such as 3D generative models~\citep{pooledreamfusion,wang2024prolificdreamer} and diffusion distillation~\citep{luo2024diff, xie2024distillation}. However, in these cases, $p * k_t$ corresponds to a given pre-trained diffusion model. In contrast, our setting only assume access to the unnormalized target density without any samples, and therefore $p*k_t$ is usually intractable. In the next section, we propose a practical gradient estimator of the reverse diffusion KL divergence for training neural samplers to capture diverse modes in unnormalized target densities.

\section{TRAINING NEURAL SAMPLERS WITH DIKL}
We focus on training neural samplers defined by a latent variable model:
\begin{align}
    p_\theta(x) = \int p_\theta(x|z) p(z) dz,\label{eq:lvm}
\end{align}
where $p(z) = \mathcal{N}(z|0,I)$ and $p_\theta(x|z)$ is parameterized by a neural network:  $p_\theta(x|z)= p(x| g_\theta(z))$.

Unlike the conventional KL divergence, which requires the model $p_{\theta}(x)$ to have a valid density function~\citep{arjovsky2017wasserstein}, SKL and DiKL are well-defined even for singular distributions (e.g., delta function), as discussed by \citet{zhang2020spread}. Therefore, we can let the generator $p_\theta(x|z)$ be a deterministic function $g_\theta$ and define the ``generalized model density''\footnote{In this case, the marginal distribution may not be absolutely continuous (a.c.) w.r.t. the Lebesgue measure, which implies that it may not have a valid density function, e.g., when $\text{Dim}(z) < \text{Dim}(x)$.} as:
\begin{align}
    p_\theta(x) = \int \delta(x - g_\theta(z)) p(z) dz.
\end{align}
This type of model is also referred to as an implicit model~\citep{goodfellow2014generative,huszar2017variational}, which is more flexible than the latent variable model defined in \Cref{eq:lvm}, as it avoids pre-defining a constrained distribution family for $p(x| g_\theta(z))$.

We now explore how to train such a neural sampler $p_\theta(x)$ to fit the unnormalized target density $p_d(x)$ using the reverse DiKL, denoted as $\mathrm{DiKL}_{\mathcal{K}}(p_\theta || p_d)$. For simplicity, we first consider DiKL with a single kernel $k_t$; the extension to multiple kernels is straightforward. The reverse DiKL can be expressed as:
\begin{align}
    &\mathrm{DiKL}_{k_t}(p_\theta || p_d) \equiv \mathrm{KL}(p_\theta * k_t || p_d * k_t) \nonumber \\
    =& \int p_\theta(x_t) \left( \log p_\theta(x_t) - \log p_d(x_t) \right) dx_t, \label{eq:dikl:single}
\end{align}
where $p_\theta(x_t) = \int k_t(x_t | x) p_\theta(x) dx$ and $p_d(x_t) = \int k_t(x_t | x) p_d(x) dx$\footnote{
To avoid notation overloading, we slightly abuse $p_d$ to represent the density function for both the clean target and the convolved target density $p_d*k_t$.
We distinguish them by their arguments: $p_d(x)$ denotes the original target density, while $p_d(x_t)$ refers to the convolved density. 
The also applies to the kernel $k_t$ and model density $p_\theta$.
}. 
The integration over $p_\theta(x_t)$ can be approximated using Monte Carlo integration. This involves first sampling $x' \sim p_\theta(x)$ and then $x_t \sim k_t(x_t | x')$. 
Inspired by~\citet{pooledreamfusion,wang2024prolificdreamer,luo2024diff}, we can derive the analytical gradient of~\Cref{eq:dikl:single} w.r.t $\theta$ as follows:
\begin{align}
    &\nabla_\theta \mathrm{DiKL}_{k_t}(p_\theta || p_d) = \nabla_\theta \mathrm{KL}(p_\theta * k_t || p_d * k_t) \label{eq:rkl:gradient} \\
    =&\int p_\theta(x_t) \left( \nabla_{x_t} \log p_\theta(x_t) - \nabla_{x_t} \log p_d(x_t) \right) \frac{\partial x_t}{\partial \theta} dx_t,\nonumber
\end{align}
The derivation can be found in \Cref{appendix:derivation_gradient_dikl}.
The Jacobian term $\frac{\partial x_t}{\partial \theta}$ can be efficiently computed by the vector-Jacobian product (VJP) with auto differentiation. 
However, both score functions, $\nabla_{x_t} \log p_\theta(x_t)$ and $\nabla_{x_t} \log p_d(x_t)$, in~\Cref{eq:rkl:gradient} are intractable to compute directly. 
To address this, we approximate these scores using denoising score matching (DSM)~\citep{vincent2011connection} and mixed score identity (MSI)~\citep{de2024target,phillips2024particle}, respectively.
Specifically, we estimate $\nabla_{x_t} \log p_\theta(x_t)$ by training a score network with DSM using samples from the sampler, and estimate $\nabla_{x_t} \log p_d(x_t)$ by MSI with Monte Carlo estimation.
Below, we explain these two estimators in detail.

\subsection{Estimating $\nabla_{x_t} \log p_\theta(x_t)$ with DSM}
Denoising score matching (DSM)~\citep{vincent2011connection} has been successfully used in training score-based diffusion models~\citep{song2021score}. DSM is based on the denoising score identity (DSI):
\begin{proposition}[\textbf{Denoising Score Identity}]\label{prop:dsm}
     For any convolution kernel $k(x_t|x)$, we have
\begin{align}
    \nabla_{x_t}\log p_\theta(x_t)=\int \nabla_{x_t}\log k(x_t|x) p_\theta(x|x_t) dx, \label{eq:dsi}
\end{align}
where $p_\theta(x|x_t)\propto k(x_t|x)p_\theta(x)$ is the model posterior.
\end{proposition}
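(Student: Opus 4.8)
The plan is to establish the Denoising Score Identity \eqref{eq:dsi} by a direct computation: differentiate the definition of the convolved density $p_\theta(x_t) = \int k(x_t|x) p_\theta(x)\, dx$ under the integral sign, and then massage the resulting expression into an expectation over the model posterior $p_\theta(x|x_t)$. This is essentially the classical Tweedie/denoising-score argument of \citet{vincent2011connection}, and I would present it in that spirit, taking care that it goes through for the ``generalized model density'' (possibly singular) used in the paper, since $p_\theta(x)$ may be a pushforward of $p(z)$ through a deterministic $g_\theta$.

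First I would write $\nabla_{x_t} \log p_\theta(x_t) = \frac{\nabla_{x_t} p_\theta(x_t)}{p_\theta(x_t)}$, valid wherever $p_\theta(x_t) > 0$, which holds everywhere because the Gaussian kernel $k(x_t|x)$ has full support. Then I would differentiate:
\begin{align}
    \nabla_{x_t} p_\theta(x_t) &= \nabla_{x_t} \int k(x_t|x)\, p_\theta(x)\, dx = \int \nabla_{x_t} k(x_t|x)\, p_\theta(x)\, dx \nonumber \\
    &= \int \big(\nabla_{x_t} \log k(x_t|x)\big)\, k(x_t|x)\, p_\theta(x)\, dx, \nonumber
\end{align}
where the middle equality is the interchange of gradient and integral, and the last uses $\nabla_{x_t} k = (\nabla_{x_t}\log k)\, k$. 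Dividing by $p_\theta(x_t)$ and recognizing that $\frac{k(x_t|x) p_\theta(x)}{p_\theta(x_t)} = p_\theta(x|x_t)$ is exactly the (normalized) model posterior by Bayes' rule, I obtain $\nabla_{x_t}\log p_\theta(x_t) = \int \nabla_{x_t}\log k(x_t|x)\, p_\theta(x|x_t)\, dx$, which is \eqref{eq:dsi}. To keep the argument valid when $p_\theta$ is only a measure (not a density), I would phrase the integrals as expectations $\mathbb{E}_{x \sim p_\theta}[\cdot]$ throughout; the identity $p_\theta(x|x_t)\, dx = \frac{k(x_t|x)}{p_\theta(x_t)} p_\theta(dx)$ still makes sense as a reweighting of the model distribution.

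The main obstacle is justifying the differentiation under the integral sign (the Leibniz rule step), since $p_\theta$ need not have a Lebesgue density. For a fixed Gaussian kernel $k(x_t|x) = \mathcal{N}(x_t|\alpha x, \sigma^2 I)$, both $k(x_t|x)$ and $\nabla_{x_t} k(x_t|x) = -\frac{1}{\sigma^2}(x_t - \alpha x)\, k(x_t|x)$ are, for $x_t$ in a bounded neighborhood, dominated uniformly in $x_t$ by an integrable-against-$p_\theta$ envelope (a Gaussian in $x$ times a polynomial), because $E$ is lower bounded and hence $p_\theta$ puts no mass at infinity in a way that breaks Gaussian integrability — more precisely, $\int \sup_{x_t \in B} |\nabla_{x_t} k(x_t|x)|\, p_\theta(dx) < \infty$. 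Invoking the dominated convergence theorem then legitimizes the interchange, and everything else is elementary. I would relegate this measure-theoretic bookkeeping to a brief remark and keep the main derivation at the level of the formal computation above, which is standard in the score-matching literature.
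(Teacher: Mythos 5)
Your proposal is correct and follows essentially the same route as the paper's proof in the appendix: write the score as $\nabla_{x_t} p_\theta(x_t)/p_\theta(x_t)$, differentiate under the integral, apply $\nabla_{x_t} k = (\nabla_{x_t}\log k)\,k$, and identify the posterior via Bayes' rule. The additional remarks on dominated convergence and the singular-measure case go beyond what the paper writes but do not change the argument.
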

See \Cref{appendix:derivation_dsi} for a proof. We can then train a time-conditioned score network $s_\phi(x_t)$ to approximate $\nabla_{x_t}\log p_\theta(x_t)$ by minimizing the score matching loss w.r.t. $\phi$:
\begin{align}
&\int \lVert s_\phi(x_t){-}\int\nabla_{x_t} \log k(x_t|x)p_\theta(x|x_t)dx\rVert_2^2 p_\theta(x_t)dx_t\nonumber\\
    =&\iint \lVert s_\phi(x_t){-}\nabla_{x_t}\log k(x_t|x)\rVert_2^2 p_\theta(x,x_t) dxdx_t {+} \text{const.},\label{eq:dsm}
\end{align}
where the equivalence can be shown by expanding the L2 norm and ignoring a term that is independent of $\phi$. The integration over $p_\theta(x,x_t)=p_\theta(x)k(x_t|x)$ can be approximated by sampling $x'\sim p_\theta(x)$ then $x_t'\sim k(x_t|x')$.
Once trained, we plug  \( s_\phi(x_t) \)  into \Cref{eq:rkl:gradient} to estimate the gradient.

\begin{figure*}[t]
    \centering
    \includegraphics[width=0.95\linewidth]{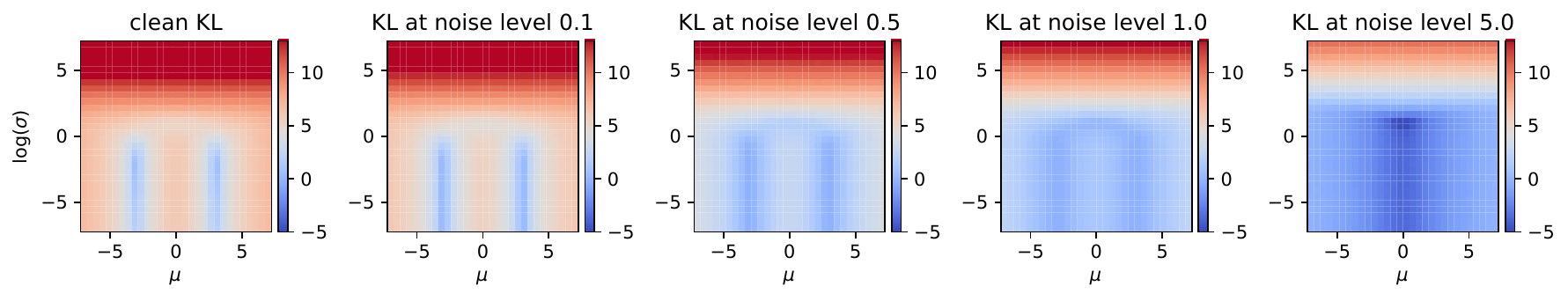}
    \caption{Heatmap of (log scale) KL divergence at different noise levels between a Gaussian model (with mean parameter $\mu$ and standard deviation parameter $\sigma$) and a two-mode MoG target in 1D.
At lower noise levels (or in the extreme case, the standard reverse KL), the divergence is highly mode-seeking, with the model favoring either one of the two modes in the target distribution.
    However, perhaps surprisingly, the KL divergence becomes more mass-covering at a higher noise level, encouraging the model to cover both modes of the target.
  }
    \label{fig:vis_why_drKL_works}
\end{figure*}

\subsection{Estimating $\nabla_{x_t} \log p_d(x_t)$ with MSI}
To estimate the gradient defined in \Cref{eq:rkl:gradient}, we also need to estimate the noisy target score $\nabla_{x_t} \log p_d(x_t)$.
Since no samples from $p_d(x)$ are available, we can no longer use DSM to estimate the score.
Fortunately, we have access to the unnormalized target density and its score function $\nabla_x\log p_d(x)=-\nabla_{x}E(x)$, which allows us to estimate this score by target score identity \citep[TSI,][]{de2024target}:
\begin{proposition}[\textbf{Target Score Identity}]\label{prop:tsm}
    For any translation-invariant convolution kernel $k(x_t|x)=k(x_t-\alpha_t x)$, we have 
\begin{align}
    \nabla_{x_t}\log p_d(x_t)=\frac{1}{\alpha_t}\int \nabla_{x} \log p_d(x) p_d(x|x_t) dx, \label{eq:tsi}
\end{align}
where $p_d(x|x_t)\propto k(x_t|x)p_d(x)$ is the target posterior.
\end{proposition}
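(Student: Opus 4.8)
The plan is to differentiate the convolved density $p_d(x_t)=\int k(x_t-\alpha_t x)\,p_d(x)\,dx$ directly, trade the $x_t$-derivative for an $x$-derivative using translation invariance, and then integrate by parts in $x$. First I would assume enough regularity to exchange $\nabla_{x_t}$ with the integral, giving $\nabla_{x_t}p_d(x_t)=\int \nabla_{x_t}k(x_t-\alpha_t x)\,p_d(x)\,dx$. The key observation is that for a translation-invariant kernel, the chain rule yields $\nabla_{x_t}k(x_t-\alpha_t x)=-\tfrac{1}{\alpha_t}\nabla_x k(x_t-\alpha_t x)$, so the gradient with respect to the noisy variable can be rewritten as a gradient with respect to the clean variable at the cost of a factor $-1/\alpha_t$.

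Next I would integrate by parts in $x$: $\int \nabla_x k(x_t-\alpha_t x)\,p_d(x)\,dx = -\int k(x_t-\alpha_t x)\,\nabla_x p_d(x)\,dx$, where the boundary terms vanish because $p_d$ and $k$ decay at infinity. Substituting $\nabla_x p_d(x)=p_d(x)\,\nabla_x\log p_d(x)$ gives $\nabla_{x_t}p_d(x_t)=\tfrac{1}{\alpha_t}\int k(x_t|x)\,p_d(x)\,\nabla_x\log p_d(x)\,dx$. Dividing both sides by $p_d(x_t)$ and recognizing $k(x_t|x)p_d(x)/p_d(x_t)=p_d(x|x_t)$ by Bayes' rule produces exactly the claimed identity $\nabla_{x_t}\log p_d(x_t)=\tfrac{1}{\alpha_t}\int \nabla_x\log p_d(x)\,p_d(x|x_t)\,dx$. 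I would also remark that the normalization constant $Z$ cancels everywhere: it appears in neither $\nabla_x\log p_d(x)=-\nabla_x E(x)$ nor in the posterior $p_d(x|x_t)$ (which is a ratio in which $Z$ cancels), so the right-hand side is computable from the unnormalized target alone.

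The main obstacle is purely technical: justifying the interchange of $\nabla_{x_t}$ with the integral and the vanishing of the boundary terms in the integration by parts. Both hold under mild assumptions — $E$ differentiable and lower-bounded so that $p_d$ is integrable with integrable gradient, together with the Gaussian-type decay and smoothness of $k_t$ — which I would either state explicitly or inherit from the same regularity conditions already used for the denoising score identity in \Cref{prop:dsm}. Beyond this, no deeper difficulty arises; once translation invariance is exploited the identity is essentially a one-line manipulation.
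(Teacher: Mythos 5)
Your proof is correct, and it reaches the identity by a slightly different route than the paper. Both arguments pivot on the same key observation — translation invariance gives $\nabla_{x_t}k(x_t-\alpha_t x)=-\alpha_t^{-1}\nabla_x k(x_t-\alpha_t x)$ — but you then work directly at the level of densities: differentiate under the integral and integrate by parts in $x$ to move the derivative onto $p_d(x)$, finishing by dividing by $p_d(x_t)$. The paper instead starts from the denoising score identity (\Cref{prop:dsm}), applies Bayes' rule in log form, $\nabla_x\log k(x_t|x)=\nabla_x\log p_d(x|x_t)-\nabla_x\log p_d(x)$, and kills the posterior-score term using $\int \nabla_x\log p_d(x|x_t)\,p_d(x|x_t)\,dx=0$. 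The two finishes are really the same integration by parts in different clothing (the zero-mean-score identity is itself an integration by parts), so neither is deeper than the other. Your version is more self-contained, not relying on the DSI as a lemma, at the cost of having to justify the vanishing boundary terms explicitly — which you correctly flag, and which is exactly the same regularity one already needs for the DSI. One cosmetic point: your side remark that $\nabla_x\log p_d(x)=-\nabla_x E(x)$ and the posterior are both free of $Z$ is true and practically relevant, but not part of the identity itself.
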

See \Cref{appendix:derivation_tsi} for a proof. 
In practice, the TSI estimator has larger variance when the Gaussian kernel $k(x_t|x)$ has larger variance, while the DSI estimator exhibits higher variance when $k(x_t|t)$ has smaller variance. 
To address this, \citet{de2024target,phillips2024particle} propose a convex combination of the DSI and TSI to interpolate between them, favoring TSI when $k(x_t|x)$ has smaller variance and DSI when $k(x_t|x)$ has larger variance, thus minimizing the overall variance of the estimator.
We refer to this estimator as the mixed score identity (MSI).
\begin{proposition}[\textbf{Mixed Score Identity}]\label{prop:msm}
    Using a Gaussian convolution $k(x_t|x)=\mathcal{N}(x_t|\alpha_t x, \sigma_t^2I)$ with a variance-preserving (VP) scheme $\sigma_t^2=1-\alpha_t^2$, and a convex combination of TSI and DSI with coefficients $\alpha_t^2$ and $1-\alpha_t^2$, respectively, we have
\begin{multline}\label{eq:msi}
        \nabla_{x_t}\log p_d(x_t)\\=\int (\alpha_t(x+\nabla_{x} \log p_d(x))-x_t) p_d(x|x_t) dx.
\end{multline}
\end{proposition}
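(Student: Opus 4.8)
The plan is to obtain the Mixed Score Identity as a simple affine combination of the two exact identities already in hand: the Denoising Score Identity (\Cref{prop:dsm}, equation~\Cref{eq:dsi}, which holds verbatim with $p_d$ in place of $p_\theta$ since its derivation only uses $p_d(x_t)=\int k(x_t|x)p_d(x)\,dx$) and the Target Score Identity (\Cref{prop:tsm}, equation~\Cref{eq:tsi}). Both identities express the \emph{same} quantity $\nabla_{x_t}\log p_d(x_t)$, so for any weights that sum to one the corresponding combination of their right-hand sides is again exactly $\nabla_{x_t}\log p_d(x_t)$; I will take the weights to be $\alpha_t^2$ on TSI and $1-\alpha_t^2$ on DSI.

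First I would specialize DSI to the Gaussian kernel $k(x_t|x)=\mathcal{N}(x_t|\alpha_t x,\sigma_t^2 I)$, for which $\nabla_{x_t}\log k(x_t|x)=\tfrac{1}{\sigma_t^2}(\alpha_t x - x_t)$, giving
\begin{equation*}
\nabla_{x_t}\log p_d(x_t)=\frac{1}{\sigma_t^2}\int (\alpha_t x - x_t)\,p_d(x|x_t)\,dx .
\end{equation*}
Next I would rewrite TSI as $\nabla_{x_t}\log p_d(x_t)=\frac{1}{\alpha_t}\int \nabla_x\log p_d(x)\,p_d(x|x_t)\,dx$. Then I would form $\alpha_t^2\cdot(\text{TSI})+(1-\alpha_t^2)\cdot(\text{DSI})$ and substitute the variance-preserving relation $\sigma_t^2=1-\alpha_t^2$. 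The prefactor $1-\alpha_t^2$ cancels the $1/\sigma_t^2$ in the DSI term, leaving $\int(\alpha_t x - x_t)\,p_d(x|x_t)\,dx$, while the prefactor $\alpha_t^2$ reduces the TSI term to $\alpha_t\int\nabla_x\log p_d(x)\,p_d(x|x_t)\,dx$. Adding the two integrands and collecting the terms proportional to $\alpha_t x$ yields
\begin{equation*}
\nabla_{x_t}\log p_d(x_t)=\int \bigl(\alpha_t(x+\nabla_x\log p_d(x))-x_t\bigr)\,p_d(x|x_t)\,dx ,
\end{equation*}
which is exactly \Cref{eq:msi}.

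There is essentially no analytical obstacle; the only points needing a line of care are (i) noting that $\alpha_t^2+(1-\alpha_t^2)=1$, which is what makes the affine combination preserve the identity rather than merely producing some weighted average, and (ii) observing that DSI, although stated for $p_\theta$ in \Cref{prop:dsm}, applies to any convolved density and hence to $p_d*k_t$. The ``mixed'' interpretation — trading off the large-$\sigma_t$ regime, where TSI is high variance, against the small-$\sigma_t$ regime, where DSI is high variance — motivates this particular choice of weights but plays no role in establishing the identity itself.
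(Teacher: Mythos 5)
Your proof is correct and follows essentially the same route as the paper's: specialize DSI to the Gaussian kernel to get $\int \sigma_t^{-2}(\alpha_t x - x_t)\,p_d(x|x_t)\,dx$, then take the convex combination $\alpha_t^2\cdot(\text{TSI}) + (1-\alpha_t^2)\cdot(\text{DSI})$ and use $\sigma_t^2 = 1-\alpha_t^2$ to cancel the prefactors. Your two remarks — that the weights summing to one is what preserves exactness, and that DSI applies verbatim to $p_d$ — are both accurate and are also noted in the paper.
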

See \Cref{appendix:derivation_msi} for a proof. This identity estimates the score $ \nabla_{x_t}\log p(x_t)$ based on the original target score $\nabla_{x} \log p_d(x)$ which can be directly evaluated. 
We can plug it into \Cref{eq:rkl:gradient} as part of the gradient approximation.

However, to use this estimator, we also need to obtain samples from the denoising posterior $p_d(x|x_t)$ to approximate the integration over $x$ in \Cref{eq:msi}. 
We notice that the posterior $p_d(x|x_t)$ is proportional to the joint $p(x,x_t)=k(x_t|x)p(x)$, taking the form
\begin{align}
  p_d(x|x_t) \propto \exp\left(-E(x) - \lVert \alpha_t x - x_t\rVert^2/2\sigma_t^2\right),\label{eq:posterior}
\end{align}
which has a tractable score function~\citep{gao2020learning,RDMC,chen2024diffusive,grenioux2024stochastic}:
\begin{align}
    \nabla_{x}\log p_d(x|x_t) = -\nabla_{x} E(x) -\frac{\alpha_t(\alpha_t x - x_t)}{\sigma_t^2}. \label{eq:posterior:score}
\end{align}
Therefore, common score-based sampler such as HMC~\citep{duane1987hybrid,neal2011mcmc},
MALA~\citep{roberts1996exponential,roberts2002langevin}, and AIS \citep{neal2001annealed} can be directly employed to sample from the denoising posterior $p(x|x_t)$. 
Notably, compared to sampling from the original target distribution $p(x) \propto \exp(-E(x))$ using standard score-based samplers, incorporating the additional quadratic term in \Cref{eq:posterior} improves the Log-Sobolev conditions, which significantly enhances the convergence speed of samplers like ULA~\citep{vempala2019rapid,RDMC}. 

It is worth noting that it is not crucial to have a perfect posterior sampler.
This is because our method essentially works in a bootstrapping manner: the posterior samples improve the model, and a better model in turn brings the posterior samples closer to the true target. 
Having said that, accurate posterior sampling may improve the convergence rate of the model.

We summarize the whole procedure of training neural samplers with DiKL in \Cref{alg:training}\footnote{\Cref{alg:training} presents the training procedure with a batch size of 1 for clarity.}. 
In short, our training algorithm forms a nested loop: 
in the inner loop, we train a score network $s_\phi(x_t)$ to estimate the model score $\nabla_{x_t} \log p_\theta(x_t)$ with DSM;
in the outer loop, we first estimate the noisy target score $\nabla_{x_t} \log p_d(x_t)$ with MSI, and then update the neural sampler with the gradient as in \Cref{eq:rkl:gradient} using our estimated noisy target and model scores. 
One might think that this nested training procedure imposes a high computational burden.
Fortunately, we found that the inner loop typically converged within 50-100 steps in practice, minimally affecting the overall training cost.
In the following, we give an empirical illustration of how DiKL can encourage mode covering.

\begin{algorithm}[t]
    \centering
    \caption{Training Neural Samplers with DiKL}\label{alg:training}
    \begin{algorithmic}
    \Require Target $p_d(x)\propto \exp(-E(x))$, Gaussian kernels $\{(\alpha_t, \sigma_t)\}_{t=1}^T$; 
   score network training step $N_\phi$; weighting function $w(t)$; Randomly initialized $\theta$, $\phi$.
    \Repeat 
    \State {\color{gray} \# Train the score network $s_\phi(x_t)$ by DSM:}
    \For {$i \in [1, \cdots, N_\phi]$}
     \State $z\sim p(z)$, $x \leftarrow g_\theta (z)$, $t \sim \mathcal{U}\{1, \cdots, T\}$
    \State $\epsilon \sim \mathcal{N}(0, I)$, $x_t \leftarrow \alpha_t x + \sigma_t \epsilon$
    \State Update $\phi$ with $\nabla_\phi  \lVert s_\phi(x_t)-\nabla_{x_t}\log k(x_t|x)\rVert_2^2$
    \EndFor
    \State {\color{gray} \# Train the neural sampler $g_\theta(z)$ by DiKL:}
    \State $z\sim p(z)$, $x \leftarrow g_\theta (z)$
    \State $t \sim \mathcal{U}\{1, \cdots, T\}$,  $\epsilon \sim \mathcal{N}(0, I)$, $x_t \leftarrow \alpha_t x + \sigma_t \epsilon$
    \State $x'^{(1:K)} \sim p_d(x|x_t)$ 
    \Comment{posterior sampling}
    \State $d_p \leftarrow  \frac{1}{K}\sum_{k=1}^K (\alpha_t(x'^{(k)}+\nabla \log p_d(x'^{(k)}))-x_t)$\\ \Comment{MSI estimator}
    \State $\ell \leftarrow w(t) \texttt{stopgrad}(s_\phi(x_t) - d_p)^\top x_t$ \\ \Comment{surrogate loss for VJP}
    \State Update $\theta$ with  $\nabla_\theta \ell$
    \Until convergence
    \end{algorithmic}
\end{algorithm}

\subsection{DiKL Encourages mass-covering}
Unlike the mode-seeking nature of reverse KL (R-KL), DiKL promotes better mode coverage.
In this section, we provide an intuitive explanation to illustrate how this is achieved.
Assume we have a 1D Mixture of Gaussian (MoG) target with two components $p_d(x) = \frac{1}{2}\mathcal{N}(x|{-}3, 0.01) +  \frac{1}{2}\mathcal{N}(x|3, 0.01)$. 
For simplicity, we fit a 1D Gaussian model $p_\theta(x) = \mathcal{N}(x|\mu, \sigma^2)$ to this target.
As this model only contains two parameters $\theta = \{\mu, \sigma\}$, we visualize log KL and log DiKL at different noise levels against these two parameters to develop a better understanding of the reverse DiKL objective, as shown in \Cref{fig:vis_why_drKL_works}.
\par
At lower noise levels (or in the extreme case, R-KL), the divergence is highly mode-seeking, with the model favoring either one of the two modes in the target distribution. However, perhaps surprisingly, at higher noise levels, DiKL becomes more mass-covering, forcing $\mu$ to converge toward the mean of the two modes and $\sigma$ to cover both modes.
This behavior explains why DiKL encourages the model to cover more modes: higher noise levels push the model to explore adjacent modes, while lower noise levels prevent the model from forgetting previously discovered modes.
\par
Below, we demonstrate this mass-covering property on a MoG-40 target before proceeding to more complex Boltzmann distributions. 
Before presenting the results, we first outline other approaches to encouraging mode coverage, which we will use as baselines.

\begin{figure*}[!htp]
    \centering
    \begin{subfigure}[b]{0.16\linewidth}
         \includegraphics[width=\linewidth]{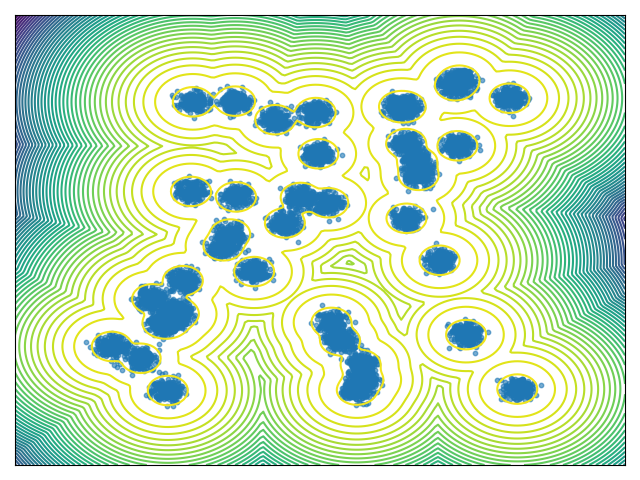}
         \caption{Ground Truth}
    \end{subfigure}
      \begin{subfigure}[b]{0.16\linewidth}
         \includegraphics[width=\linewidth]{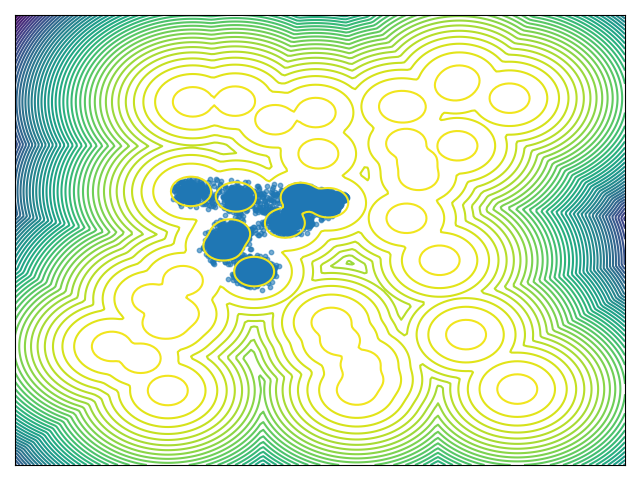}
            \caption{R-KL SM}
    \end{subfigure}
        \begin{subfigure}[b]{0.16\linewidth}
         \includegraphics[width=\linewidth]{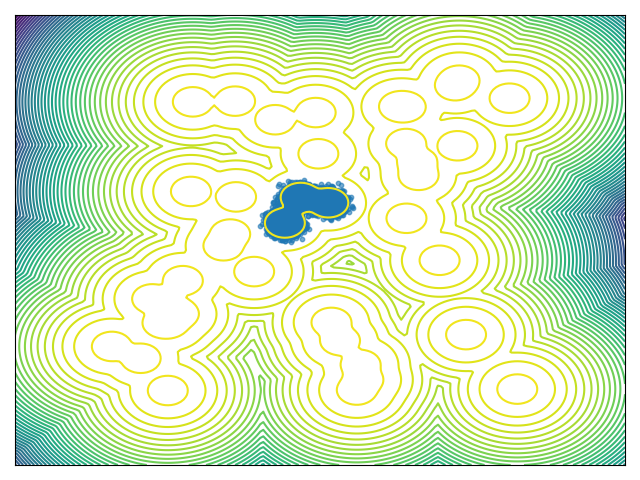}
            \caption{R-KL Bound}
    \end{subfigure}
    \begin{subfigure}[b]{0.16\linewidth}
         \includegraphics[width=\linewidth]{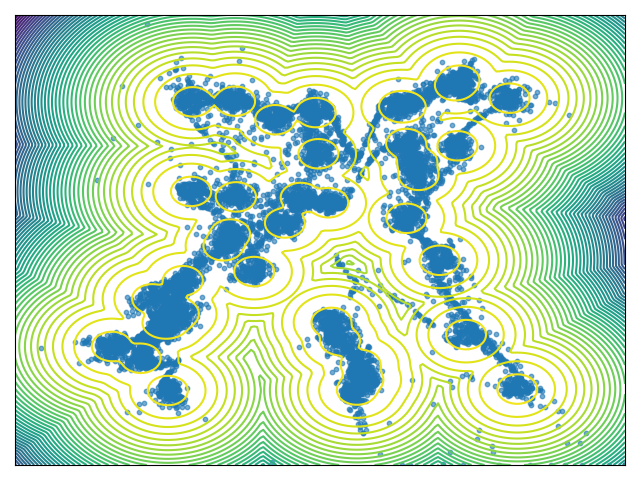}
            \caption{FAB}
    \end{subfigure}
    \begin{subfigure}[b]{0.16\linewidth}
         \includegraphics[width=\linewidth]{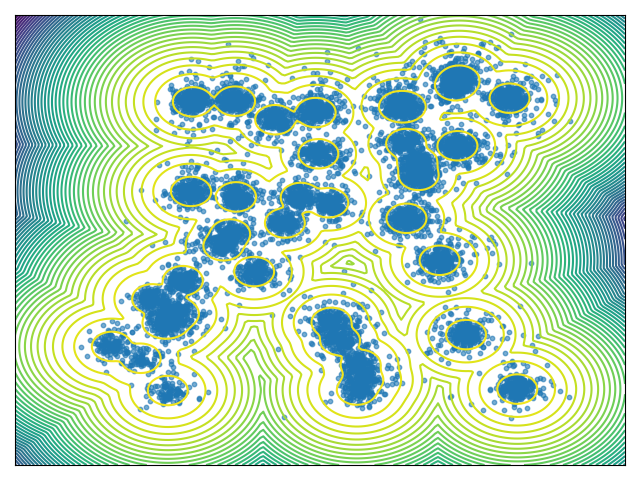}
            \caption{iDEM}
    \end{subfigure}
    \begin{subfigure}[b]{0.16\linewidth}
         \includegraphics[width=\linewidth]{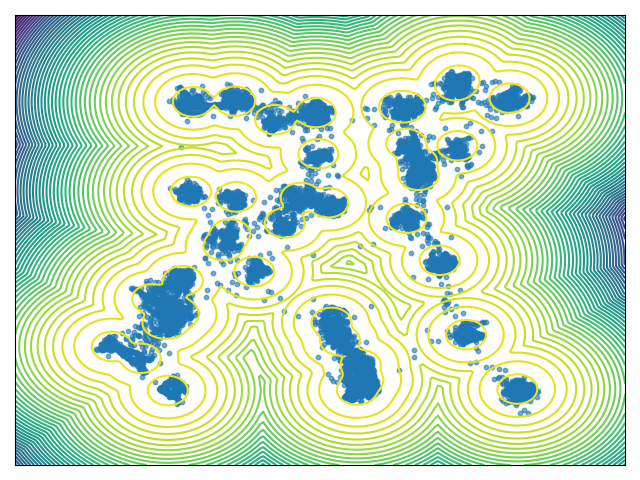}
            \caption{DiKL (ours)}
    \end{subfigure}
    \caption{Samples on MoG-40.
    We train each method for 2.5 hours, which allows all to converge.
    FAB and iDEM use replay buffers as in \citet{midgley2023flow,akhound2024iterated}.
    The high-density regions of this target are within $[-50, 50]$. 
    All methods were trained on the original scale, except for iDEM, which is normalized to \([-1, 1]\) following \citet{akhound2024iterated}.
    This normalization may simplify the task.
    }\label{fig:gmm_res}
\end{figure*}

\vspace{-0.1cm}
\section{RELATED WORKS}
\vspace{-0.1cm}
Several different types of neural samplers have been proposed in the literature, which we summarize below.

\textbf{Latent-variable-model samplers.} 
Training latent variable models as neural samplers with Fisher divergence, reverse KL (R-KL-SM) or its upper bound (R-KL Bound) has been explored in the literature~\citep{li2017gradient,shi2017kernel,zhang2019variational,song2020sliced,luo2024entropy,yin2018semi, hu2018stein}; See \Cref{eq:rkl,eq:upper_bound_rkl,eq:neg_entropy_grad}.
Such methods typically struggle for multi-modal target distributions. 
\par
\begin{table}[t]
\centering
\caption{Comparison of the log-density of samples generated by various methods, evaluated on the target density of MoG-40. ``True'' indicates the log density of true samples from the target distribution. We only report the evaluation methods that can cover all the modes, see \Cref{fig:gmm_res} for the sample visualization. 
}\label{table:gmm}
\begin{tabular}{c c c c c}
\toprule
 & \textit{True} & FAB & iDEM & DiKL (Ours) \\
\midrule
\textbf{$\log p_d(x)$}   & \textit{-6.85} & -10.74 & -8.33 & \textbf{-7.21} \\
\bottomrule
\end{tabular}
\end{table}
\textbf{Flow-based samplers.} 
Flow AIS bootstrap (FAB) \citep{midgley2023flow} is the state-of-the-art (SOTA) flow-based sampler, which is trained by minimizing the $\alpha$-2 divergence, which exhibits mass covering property. 
Note that FAB employs a prioritized replay buffer to memorize the regions that have been explored.
\par
\textbf{Diffusion/control-based samplers.} 
Several works have explored diffusion/control-based samplers. For example, the Gibbs-style sampler~\citep{grenioux2024stochastic,chen2024diffusive,zhang2023moment} constructs a forward-backward sampling procedure between the clean data space and the diffusion space.
The path integral sampler \citep[PIS,][]{zhang2022path} and the denoising diffusion sampler \citep[DDS,][]{vargas2023denoising} align the forward and backward paths by optimizing the KL divergence over the entire path measure.
GFlowNet-based sampler \citep{bengio2023gflownet,zhangdiffusion} extends these approaches to objectives with local information, like sub-trajectory balance and detailed balance.
Controlled Monte Carlo diffusions \citep[CMCD, ][]{nusken2024transport}  learns an escorted transport between interpolants from the prior distribution to the target distribution by matching the KL or log-variance divergence between the forward and backward path measures.
Non-equilibrium transport sampler \citep[NETS, ][]{albergo2024nets} learns a similar escorted transport with PINN \citep{sun2024dynamical} or Action Matching \citep{neklyudov2023action} loss.
Further improvements including combining these samplers with SMC \citep{chen2024sequential}, or incorporating MCMC to improve buffer samplers \citep{sendera2025improved}. 
However, these methods typically involve simulating the SDE by numerical integration during training, which is not scalable. 
Iterated denoising energy matching \citep[iDEM,][]{akhound2024iterated} is one of the SOTA samplers that trains a score network to approximate the noisy score of the target estimated by TSI. 
iDEM also employs a reply buffer to balance exploration and exploitation.

We compare our methods with each type of SOTA neural sampler on a mixture of 40 Gaussians (MoG-40) target in 2D following \citet{midgley2023flow}, which allows us to visually examine their mass-covering properties.
As shown in \Cref{fig:gmm_res}, our approach achieves better sample quality than all other compared neural samplers.
R-KL-based samplers struggle to capture the majority of modes due to the mode-seeking property.
FAB captures all modes but exhibits heavy density connections between modes due to the flow architecture. 
This is in contrast to our sampler which only requires using standard neural networks, which is more flexible.
As for iDEM, while it does not exhibit such connections, its samples look noisy.
This is because iDEM requires score estimation across all noise levels from the target towards a pure Gaussian distribution, which leads to high variance at larger noise levels due to TSI. 
In contrast, our approach samples directly from a generator $g_\theta$ and uses a Gaussian kernel only to connect adjacent modes, allowing for a much smaller noise level and more manageable variance. 

\textbf{Distillation for Diffusion Models.}
Variational score distillation (VSD) is a promising approach to distill knowledge from pre-trained diffusion models. 
The concept of DiKL has been successfully applied in these approaches for 3D generative models~\citep{pooledreamfusion,wang2024prolificdreamer} and diffusion distillation~\citep{luo2024diff, xie2024distillation}. 
Different from our application, they have a pre-trained diffusion model to provide the score of $p * k_t$. 
On the other hand, KL-based neural samplers \citep{li2017gradient,shi2017kernel,luo2024entropy} provide a KL estimator when there is no Gaussian convolution. 
From this perspective, we can see that our approach lies conceptually between VSD and KL-based neural samplers.

\vspace{-2pt}
\section{APPLICATION TO BOLTZMANN GENERATORS}\label{sec:boltzmann}
\vspace{-3pt}
\begin{figure*}[ht]
  \centering
  \subcaptionbox{Ground Truth}
      {\includegraphics[width=0.17\linewidth]{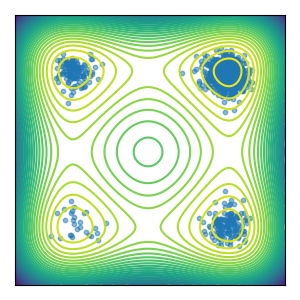}}\quad
   \subcaptionbox{KL}
      {\includegraphics[width=0.17\linewidth]{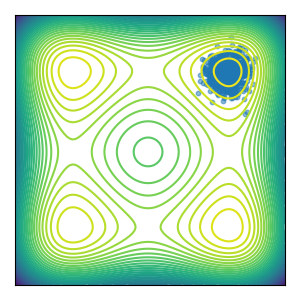}}\quad
   \subcaptionbox{FAB}
      {\includegraphics[width=0.17\linewidth]{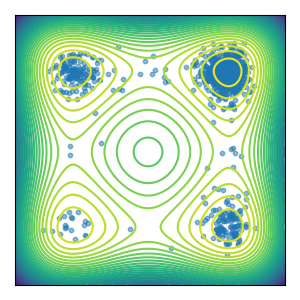}}\quad
       \subcaptionbox{iDEM}
      {\includegraphics[width=0.17\linewidth]{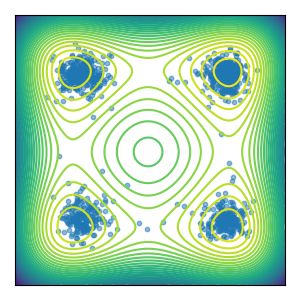}}\quad
       \subcaptionbox{DiKL (ours)}
      {\includegraphics[width=0.17\linewidth]{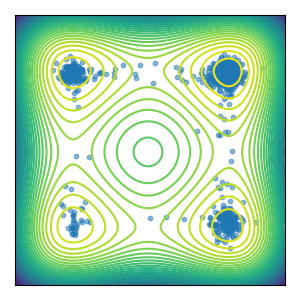}}
    \caption{2D marginal (1st and 3rd dimensions) of samples from MW-32.
    Our approach and FAB manage to find all the modes with correct weights, iDEM finds all modes but with wrong weights, and the neural sampler trained with standard KL divergence only capture one mode.\vspace{10pt}
    }
    \label{fig:manywell_vis}
    \vspace{-0.3cm}

\end{figure*}
\begin{figure}[t]
    \centering
    \includegraphics[width=\linewidth]{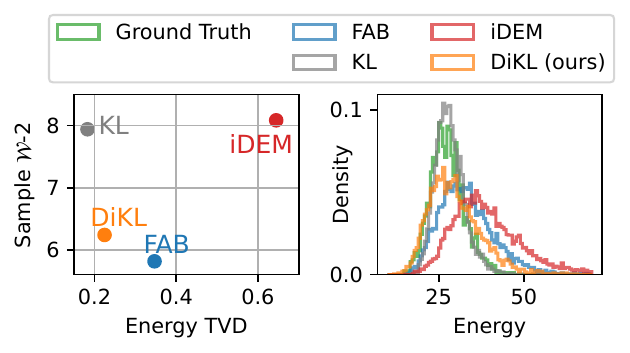}
    \caption{\textbf{Left. }Wasserstein-2 ($\mathcal{W}$-2) distance of samples and total variation distance (TVD) of energy on MW-32.   Our method and FAB clearly outperform iDEM and KL in this evaluation. \textbf{Right. }Histogram of sample energy. Our approach outperforms both FAB and iDEM. 
     Note that although the KL approach yields better energy, it captures only one mode, as shown in \Cref{fig:manywell_vis}.
    }
    \label{fig:MW_vis}
\end{figure}
One important application of neural samplers is to generate samples from Boltzmann distributions, where the target distribution defines the probability density that a system will be in a certain state as a function of that state's energy and the temperature of the system.
This type of neural sampler is also known as Boltzmann Generator \citep{noe2019boltzmann}.
In the following, we will omit the temperature for simplicity, as it can be absorbed into the energy function.
\par
In this section, we use our neural sampler $g_{\theta}$ as a Boltzmann Generator to generate samples from $n$-body systems\footnote{Code for all experiments is available at \url{https://github.com/jiajunhe98/DiKL}.},
where the energy is defined over the pairwise distances between $n$ particles.
These systems can be defined in either internal or Cartesian coordinates.
Note that, for Cartesian coordinates, the energy of the system will remain invariant if we apply rotation, reflection, translation, and permutation to the entire system.
Formally, representing each configuration of the system by a matrix $X \in \mathbb{R}^{n\times d}$, our target distribution $p_d(X)$ is invariant to the product group of the Euclidean group and the Symmetric group of degree $n$, i.e. $G = \text{E}(d) \times \mathbb{S}_n$.
\par
This invariance presents a challenge when training the neural sampler. 
Recall the sampler learns a mapping \( g_\theta: \mathcal{Z} \rightarrow \mathcal{X} \), where \( \mathcal{X} \) represents the space of system configurations, and \( \mathcal{Z} \) represents the latent space. 
If \( \mathcal{X} \) includes configurations with symmetries but \( \mathcal{Z} \) does not account for these symmetries, the network would need to model every equivariant configuration separately (for example, the model would need to assign same density for the configurations in one equivariant class respect to $G$), leading to inefficient training.
\par
On the other hand, we can parameterize the neural sampler $g_{\theta}$ with an Equivariant Graph Neural Networks \citep[EGNN, ][]{satorras2021n,hoogeboom2022equivariant}, ensuring that $g_\theta$ is $G$-equivariant. 
\begin{proposition}\label{prop:g_eqvar}
    Let the neural sampler \( g_\theta: \mathcal{Z} \rightarrow \mathcal{X} \) be an $G$-equivariant mapping. 
    If the distribution $p(Z)$ over the latent space $\mathcal{Z}$ is $G$-invariant, then $p_{\theta}(X) = \int \delta(X -  g_\theta(Z)) p(Z) dZ$ is $G$-invariant.
\end{proposition}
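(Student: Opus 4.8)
The plan is to verify $G$-invariance of $p_\theta$ directly from the definition, by pushing the group action through the generator using its equivariance and through the latent measure using its invariance. Rather than manipulate the Dirac delta directly (which is delicate, since $p_\theta(X)$ need not admit a density), I would first record the pushforward characterization of the model: for every bounded measurable test function $f:\mathcal{X}\to\mathbb{R}$,
\[
\int f(X)\, p_\theta(X)\, dX \;=\; \mathbb{E}_{Z\sim p(Z)}\big[f(g_\theta(Z))\big].
\]
Then $p_\theta$ being $G$-invariant is equivalent to the statement that, for every $\rho\in G$ (acting on $\mathcal{X}$, with the corresponding action on $\mathcal{Z}$ also denoted $\rho$), one has $\int f(\rho^{-1}X)\, p_\theta(X)\, dX = \int f(X)\, p_\theta(X)\, dX$ for all such $f$.

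The argument then has three substitutions, carried out in this order. First, rewrite the left-hand side via the pushforward identity: $\int f(\rho^{-1}X)\, p_\theta(X)\, dX = \mathbb{E}_{Z\sim p}\big[f(\rho^{-1}g_\theta(Z))\big]$. Second, apply the $G$-equivariance of $g_\theta$, i.e. $\rho^{-1}g_\theta(Z) = g_\theta(\rho^{-1}Z)$, to obtain $\mathbb{E}_{Z\sim p}\big[f(g_\theta(\rho^{-1}Z))\big]$. Third, use the $G$-invariance of $p(Z)$: since $\rho^{-1}Z$ has the same law as $Z$ when $Z\sim p$, this equals $\mathbb{E}_{Z\sim p}\big[f(g_\theta(Z))\big] = \int f(X)\, p_\theta(X)\, dX$. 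As $f$ and $\rho$ are arbitrary, $p_\theta$ is $G$-invariant. If one prefers the density-level presentation, the same three substitutions inside $\int \delta(X-g_\theta(Z))\,p(Z)\,dZ$ give
\[
p_\theta(\rho X)=\int \delta(\rho X-g_\theta(Z))\,p(Z)\,dZ=\int \delta(X-g_\theta(\rho^{-1}Z))\,p(Z)\,dZ=\int \delta(X-g_\theta(Z'))\,p(Z')\,dZ'=p_\theta(X),
\]
where the second equality uses $\delta(\rho\,\cdot)=\delta(\cdot)$ for the linear parts of $G$ and the substitution $Z'=\rho^{-1}Z$ is volume preserving.

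The one subtlety — and the step I would be most careful about — is the translation component of $\text{E}(d)$: a translation is not a linear map on $\mathbb{R}^{n\times d}$, and there is no translation-invariant probability measure $p(Z)$ on all of $\mathbb{R}^{n\times d}$. In the $n$-body setting this is standardly handled by working on the zero-center-of-mass subspace, on which translations act trivially (equivalently, taking $g_\theta$ and $p(Z)$ to be defined modulo translations). I would state this restriction explicitly and note that on that subspace $G$ acts by volume-preserving linear maps — orthogonal transformations, reflections, and coordinate permutations, all with unit Jacobian determinant — so both the change-of-variables step and the identity $\delta(\rho\,\cdot)=\delta(\cdot)$ hold, and the test-function argument above needs no modification.
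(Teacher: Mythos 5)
Your proof is correct and your density-level display is essentially identical to the paper's own proof, which manipulates the Dirac delta via the equivariance of $g_\theta$, the invariance of $p(Z)$, and the volume-preserving substitution $Z' = G^{-1}\circ Z$. Your test-function/pushforward formulation and the explicit restriction to the zero-center-of-mass subspace are careful refinements of the same argument (and are apt, since the paper itself notes $p_\theta$ may not be absolutely continuous), but they do not constitute a different route.
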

The proof can be found in \Cref{appendix:neural_sampler_theorem}. Therefore, our neural sampler does not need to explicitly model the invariance, simplifying the training process.
\par
However, another challenge arises from translation.
As noted by \citet{midgley2024se}, there does not exist a translation invariant probability measure in Euclidean space.
Therefore, following \citet{midgley2024se,satorras2021n,hoogeboom2022equivariant,akhound2024iterated}, we constrain both $\mathcal{X}$ and  $\mathcal{Z}$ to be the subspace of $\mathbb{R}^{n\times d}$ with zero center of mass, i.e., $X^\top 1 =Z^\top 1= 0$.
This allows us to embed the product group in $n\times d$ into an orthogonal group in $nd$-dimensional space: $\text{E}(d) \times \mathbb{S}_n 	\hookrightarrow \text{O}(nd)$.
\par
Having decided on the architecture for the neural sampler $g_{\theta}$ and tackled the translation invariance, we now consider the scoring network $s_{\phi}$ for the neural sampler.
According to \citet[][Lemma 2]{papamakarios2021normalizing}, \emph{if $G$ is a subgroup of the orthogonal group, then the gradient of a $G$-invariant function is $G$-equivariant.}
 Therefore, the score network for the neural sampler which defines the 
$G$-invariant distribution is $G$-equivariant. 
 To achieve this, we train an EGNN score network with denoising score matching (DSM) within the zero-centered subspace, following \citet{hoogeboom2022equivariant}.
\par
Additionally, when both the model and target density are $G$-invariant, the identity  $\nabla \log p_\theta(X_t) - \nabla \log p_d(X_t)$ in the gradient of reverse DiKL as shown in \Cref{eq:rkl:gradient} should also be $G$-equivariant.
This necessitates a $G$-equivariant MSI estimator for $\int (\alpha_t(X{+}\nabla \log p_d(X)){-}X_t) p_d(X|X_t) dx$.
Fortunately, this holds true for a broad class of estimators under mild conditions, including importance sampling and AIS estimators, with different choices of samplers for $p_d(X|X_t)$, such as MALA and HMC.
Detailed discussion can be found in \Cref{appendix:invariance_theorem}.

\subsection{Experiments and Results}

\par
We evaluate our approach on three distinct Boltzmann distributions: Many-Well-32 in the internal coordinate, and Double-Well-4 and Lennard-Jones-13 in the Cartesian coordinate. 
These tasks aim to provide a comprehensive evaluation of highly multi-modal targets, encompassing both balanced and imbalanced modes, as well as energy functions exhibiting invariance.
Detailed setups can be found in \Cref{appendix:experiment_setup}.
\par
\textbf{Internal MW-32.}
\citet{midgley2023flow} introduced this target by stacking 2D Double-Well 32 times, forming a distribution with $2^{32}$ modes in total.
These modes carry different weights. 
Therefore, this task can assess whether each method successfully covers all modes and accurately captures their weights.
\par
We report the Wasserstein-2 ($\mathcal{W}$-2) distances between samples yielded by these methods and ground truth samples obtained by MCMC.
We also evaluate the sample energy and report the total-variant distance (TVD) between the distribution of the energy of samples.
We note that both metrics have their limitations: 
$\mathcal{W}$-2 tends to be less sensitive to noisy samples, which can be particularly detrimental in some 
$n$-body systems. 
Conversely, the energy TVD is less sensitive to missing modes. 
To provide a comprehensive evaluation, we plot both metrics together in \Cref{fig:MW_vis}, which shows that our method and FAB clearly outperform iDEM and KL in this evaluation. We also visualize the samples along two selected axes to assess mode coverage in \Cref{fig:manywell_vis}, showing that only our approach and FAB can find all modes with roughly correct weights.

\begin{figure}
    \centering
    \begin{subfigure}{\linewidth}
    \centering
        \includegraphics[width=\linewidth]{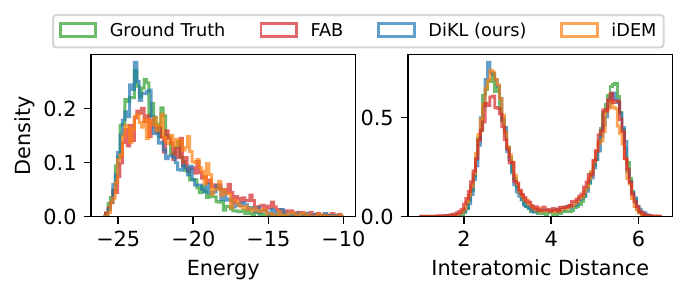}
        \caption{DW-4}
    \end{subfigure}
    \begin{subfigure}{\linewidth}
    \centering
        \includegraphics[width=\linewidth]{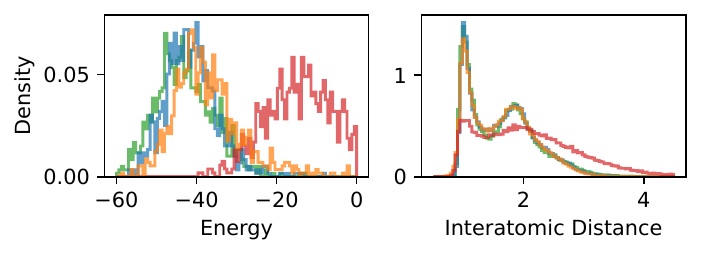}
        \caption{LJ-13}
    \end{subfigure}
    \caption{Histogram of sample energy and interatomic distance on Cartesian DW-4 and LJ-13. 
    Our approach achieves comparable performance on both tasks to iDEM, with only 1 number of function evaluation (NFE), while iDEM requires 1,000 NFEs.
    }
    \label{fig:ljdw}
    \vspace{-0.5cm}
\end{figure}

\begin{table*}
\centering
\caption{Comparison of our approach with FAB and iDEM on Cartisian DW-4 and LJ-13.
We report the Wasserstein-2 ($\mathcal{W}$-2) distance of samples, and total variation distances (TVDs) of energy and atomic distances.  
We evaluate each metric using 5,000 samples repeated ten times and report the mean and standard deviation.
}\label{tab:res_SE3}
\resizebox{\textwidth}{!}{%
\begin{tabular}{@{}ccccccc@{}}
\toprule
 & \multicolumn{3}{c}{\textbf{Cartesian DW-4 (8D)}} & \multicolumn{3}{c}{\textbf{Cartesian LJ-13 (39D)}} \\ \cmidrule(r){2-4}\cmidrule(r){5-7} 
 & {\small\shortstack{Sample $\mathcal{W}$-2}} & {\small\shortstack{Energy TVD}} & {\small\shortstack{Distance TVD}} & {\small\shortstack{Sample $\mathcal{W}$-2}} & {\small\shortstack{Energy TVD}} & {\small\shortstack{Distance TVD}} \\ \midrule
 \textbf{FAB} & \textbf{1.554 $\pm$ 0.015} & 0.224 $\pm$ 0.008 & \textbf{0.097 $\pm$ 0.005} &  4.938 $\pm$ 0.009 &  0.902 $\pm$ 0.010  &  0.252 $\pm$ 0.002 \\
\textbf{iDEM} & 1.593 $\pm$ 0.012 & \underline{0.197 $\pm$ 0.010} & 0.103 $\pm$ 0.005 & \textbf{4.172 $\pm$ 0.007} & \underline{0.306 $\pm$ 0.013}  & \underline{0.044 $\pm$ 0.001} \\
\textbf{DiKL (ours)} & \underline{1.581 $\pm$ 0.026} & \textbf{0.167 $\pm$ 0.012} & \underline{0.101 $\pm$ 0.006} & \underline{4.233 $\pm$ 0.008} & \textbf{0.239 $\pm$ 0.019} & \textbf{0.042 $\pm$ 0.002} \\ \bottomrule
\end{tabular}%
}
\end{table*}

\textbf{Cartesian DW-4 and LJ-13.} 
\citet{kohler2020equivariant} introduced these two tasks to access the model under invariance target distributions. 
Specifically, the target energy is invariant to the product group $G=\text{E}(d)\times \mathbb{S}_n$ where $d=2$ and 3 for DW and LJ, respectively.
\par
We compare our approach with FAB and iDEM.
We report the $\mathcal{W}$-2 distance for samples, TVD for energy, and TVD for interatomic distance in \Cref{tab:res_SE3}.
We also visualize histograms for sample energy and interatomic distance in \Cref{fig:ljdw}.
As shown, our method achieves competitive performance with both FAB and iDEM on DW-4, and notably outperforms FAB on LJ-13.
\par
\textbf{Training and sampling time.} 
We report the training and sampling times for our approach and baselines in \Cref{tab:time}. 
Notably, our method shows faster training and sampling times compared to both FAB and iDEM. 
FAB depends on a large and limited normalizing flow, which leads to significantly longer training times, particularly for complex tasks like LJ. 
In contrast, our approach maintains consistent training times across different tasks. 
iDEM is diffusion-based and requires intensive computation for sampling, which is 1,000 times slower than our approach.
\par
\textbf{Discussion.}
Our method is comparable to SOTA Boltzmann generators, including FAB and iDEM, and has both faster training and sampling times. 
Additionally, we highlight that, unlike iDEM and FAB, which use replay buffers for exploration-exploitation balance, our approach achieves this \emph{without} relying on any replay buffer, offering a more clean, straightforward and easy-to-extend solution.

\begin{table}
\centering
\caption{Training and sampling wall-clock times for FAB, iDEM and our sampler.
We measure this on a single NVIDIA A100 (80GB) GPU. 
We omit the sampling times for FAB on DW-4 and LJ13 as it is implemented in JAX with JIT compilation, making direct comparison with the other methods implemented in PyTorch not feasible. 
However, we expect FAB to have slightly slower sampling times than our method due to its large flow network.}\label{tab:time}
\resizebox{\linewidth}{!}{%
\begin{tabular}{@{}ccccc@{}}
\toprule
  &  & \textbf{FAB} & \textbf{iDEM} & \textbf{DiKL (ours)} \\ \midrule
\multirow{3}{*}{\textbf{Training}} & MW-32 & 3.5h  & 3.5h  & \textbf{2.5h}  \\
 & DW-4 & 4.5h  & 4.5h  & \textbf{0.9h} \\
 & LJ-13 & 21.5h & \textbf{6.5h}  &  \textbf{6.5h} \\ \midrule
\multirow{3}{*}{\textbf{\shortstack{Batch Sampling \\ (1,000 samples)}}} & MW-32 &  \textbf{0.01s} & 7.2s  & \textbf{0.01s}  \\
 & DW-4 & - & 2.6s & \textbf{0.01s}  \\
 & LJ-13 & - &19.7s  & \textbf{0.02s}  \\ \bottomrule
\end{tabular}%
}
\end{table}

\section{CONCLUSION}
In this work, we proposed a new training paradigm for neural samplers using reverse diffusive KL divergence, providing a simple yet efficient method to achieve the mass-covering property. We demonstrated its effectiveness on both synthetic and $n$-body system targets. 
Our approach matches or outperforms SOTA methods like FAB and iDEM, with improved training and sampling efficiency and without relying on replay buffer.

While our method achieves comparable performance with SOTA flow-based and diffusion-based samplers with faster training and sampling speed, it has a few limitations as discussed below.

\textbf{Model density.} The density $p_{\theta}(x)$ of our neural sampler is intractable since we have a latent variable model with a nonlinear and non-invertible generator $g_{\theta}(z)$. Compared to FAB, although we have a more flexible generator, we cannot use importance re-weighting to correct the potential bias of generated samples since we do not have access to the density of our samples.

\textbf{Model flexibility.} Although our one-step generator $g_{\theta}(z)$ has significantly faster sampling speed than diffusion-based samplers, it has limited model flexibility compared to multi-step diffusion models such as iDEM. 
It is, therefore, more difficult for our model to handle more complicated energy functions.
For example, DiKL cannot capture more complicated target LJ-55 well, as shown in \Cref{appendix:lj55}. 
Therefore, a promising direction is to design approaches to balance the sampling speed and model expressiveness. 

\textbf{Training stability.} For the two $n$-body system targets in the Cartesian coordinate (i.e., DW-4 and LJ-13), training can be unstable near convergence. We employed some criteria on the energies of model samples to perform early stopping. For future work, it might be beneficial to employ a replay buffer similar to the one used in FAB or iDEM to balance exploration and exploitation and stabilize training.

\textbf{Posterior sampling.} Posterior sampling could be a bottleneck in our training procedure, which requires running MCMC to sample from the denoising posterior distribution $p_d(x|x_t)$ at each training iteration. The MCMC sampling procedure might have a slow mixing speed for complicated target distributions. In fact, we note that most diffusion-based samplers have this bottleneck since there is no data available to train a neural network denoiser, unlike diffusion or score-based generative models. 

\clearpage
\section*{Acknowledgments}
JH is supported by the University of Cambridge Harding Distinguished Postgraduate Scholars Programme.
JH and JMHL acknowledge support from a Turing AI Fellowship under grant EP/V023756/1.
MZ and DB acknowledge funding from AI Hub in Generative Models, under grant EP/Y028805/1.
\par
Part of this work was performed using resources provided by the Cambridge Service for Data Driven Discovery (CSD3) operated by the University of Cambridge Research Computing Service (\url{www.csd3.cam.ac.uk}), provided by Dell EMC and Intel using Tier-2 funding from the Engineering and Physical Sciences Research Council (capital grant EP/T022159/1), and DiRAC funding from the Science and Technology Facilities Council (\url{www.dirac.ac.uk}).

\bibliography{ref}

\onecolumn
\aistatstitle{
Training Neural Samplers with Reverse Diffusive KL Divergence: \\
Supplementary Materials}
\thispagestyle{plain}
\tableofcontents

\appendix

\section{DERIVATION OF ANALYTICAL GRADIENT FOR REVERSE DIKL}\label{appendix:derivation_gradient_dikl}
The gradient of reverse DiKL w.r.t. the model parameter $\theta$ is given by
\begin{equation}
    \nabla_\theta \mathrm{DiKL}_{k_t}(p_\theta || p_d) 
    =\int p_\theta(x_t) \left( \nabla_{x_t} \log p_\theta(x_t) - \nabla_{x_t} \log p_d(x_t) \right) \frac{\partial x_t}{\partial \theta} dx_t.
\end{equation}
\begin{proof}
    The reverse DiKL at time $t$ is defined as
    \begin{equation}
        \mathrm{DiKL}_{k_t}(p_\theta || p_d) 
        = \int  \left( \log p_\theta(x_t) - \log p_d(x_t) \right) p_\theta(x_t) dx_t.
    \end{equation}
    We first reparameterize $x_t$ as a function of $z$ and $\epsilon$:
    \begin{equation}
        x_t = \alpha_t g_{\theta}(z) + \sigma_t \epsilon_t\equiv h_{\theta}(z,\epsilon_t),
    \end{equation}
    where $z\sim p(z)\equiv \mathcal{N}(z|0,I)$ and $\epsilon_t\sim p(\epsilon_t)\equiv\mathcal{N}(\epsilon_t|0,I)$. It then follows that
    \begin{align}
        \nabla_\theta \mathrm{DiKL}_{k_t}(p_\theta || p_d)
        &= \nabla_\theta \int \left( \log p_\theta(x_t) - \log p_d(x_t) \right) p_\theta(x_t) dx_t \\
        &= \nabla_\theta \iiint \left( \log p_\theta(x_t) - \log p_d(x_t) \right)\delta(x_t-h_{\theta}(z,\epsilon_t)) p(z) p(\epsilon_t) dx_tdzd\epsilon\\
        &= \nabla_\theta \iint \left( \log p_\theta(x_t) - \log p_d(x_t) \right)|_{x_t=h_{\theta}(z,\epsilon_t)} p(z) p(\epsilon_t) dzd\epsilon\\
        &=\iint \left.\left(\nabla_\theta \log p_\theta(x_t)+\nabla_{x_t} \log p_\theta(x_t)\frac{\partial x_t}{\partial \theta} - \nabla_{x_t} \log p_d(x_t)\frac{\partial x_t}{\partial \theta} \right)\right|_{x_t=h_{\theta}(z,\epsilon_t)} p(z)p(\epsilon_t) dzd\epsilon \\
        &= \int \left(\nabla_\theta \log p_\theta(x_t)+\nabla_{x_t} \log p_\theta(x_t)\frac{\partial x_t}{\partial \theta} - \nabla_{x_t} \log p_d(x_t)\frac{\partial x_t}{\partial \theta} \right) p_{\theta}(x_t)dx_t \\
        &= \int \left(\nabla_{x_t} \log p_\theta(x_t)\frac{\partial x_t}{\partial \theta} - \nabla_{x_t} \log p_d(x_t)\frac{\partial x_t}{\partial \theta} \right) p_{\theta}(x_t)dx_t,
    \end{align}
    where the last line follows since
    \begin{equation}
        \int \nabla_{\theta}\log p_\theta(x_t)p_\theta(x_t) dx_t = \int \nabla_{\theta} p_\theta(x_t)dx_t =\nabla_{\theta}\int  p_\theta(x_t)dx_t = \nabla_{\theta} 1=0.
    \end{equation}
    This completes the proof.
\end{proof}

\section{DERIVATIONS OF SCORE IDENTITIES}

\subsection{Derivation of Denoising Score Identity (DSI)}\label{appendix:derivation_dsi}
\begin{manualprop}{\ref{prop:dsm}}[\textbf{Denoising Score Identity}]
     For any convolution kernel $k(x_t|x)$, we have
\begin{align}
    \nabla_{x_t}\log p_\theta(x_t)=\int \nabla_{x_t}\log k(x_t|x) p_\theta(x|x_t) dx, %
\end{align}
where $p_\theta(x|x_t)\propto k(x_t|x)p_\theta(x)$ is the model posterior.
\end{manualprop}

\begin{proof}
It follows that
\begin{align}
    \nabla_{x_t}\log p_\theta(x_t)
    & = \frac{\nabla_{x_t} p_\theta(x_t)}{p_\theta(x_t)} \\
    &=\frac{\nabla_{x_t}\int k(x_t|x)p_\theta(x)dx}{p_\theta(x_t)}\\
    &=\frac{\int \nabla_{x_t}k(x_t|x)p_\theta(x)dx}{p_\theta(x_t)}\\
    & =\frac{\int \nabla_{x_t}\log k(x_t|x) k(x_t|x)p_\theta(x)dx}{p_\theta(x_t)}\\
    & =\int \nabla_{x_t}\log k(x_t|x) p_\theta(x|x_t) dx.
\end{align}
Note that the same argument can be used to derive DSI for the target distribution $p_d(x)$:
\begin{equation}
    \nabla_{x_t}\log p_d(x_t) = \int \nabla_{x_t}\log k(x_t|x) p_d(x|x_t) dx.
\end{equation}
\end{proof}

\subsection{Derivation of Target Score Identity (TSI)}\label{appendix:derivation_tsi}
\begin{manualprop}{\ref{prop:tsm}}[\textbf{Target Score Identity}]
    For any translation-invariant convolution kernel $k(x_t|x)=k(x_t-\alpha_t x)$, we have 
\begin{align}
    \nabla_{x_t}\log p_d(x_t)=\frac{1}{\alpha_t}\int \nabla_{x} \log p_d(x) p_d(x|x_t) dx, %
\end{align}
where $p_d(x|x_t)\propto k(x_t|x)p_d(x)$ is the target posterior.
\end{manualprop}
\begin{proof}
Since $k(x_t|x)=k(x_t-\alpha_t x)$ is translation-invariant, we have
\begin{equation}
    \nabla_{x_t}\log k(x_t|x) = -\alpha^{-1}\nabla_{x}\log k(x_t|x).
\end{equation}
But by Bayes rule, we have 
\begin{equation}
\nabla_{x}\log k(x_t|x)=\nabla_{x}\log p_d(x|x_t)-\nabla_{x} \log p_d(x).
\end{equation}
Using DSI, it then follows that 
\begin{align}
    \nabla_{x_t}\log p_d(x_t)
    &=\int \nabla_{x_t} k(x_t|x)p_d(x|x_t)dx\\
    &=-\alpha^{-1}\int \nabla_{x} k(x_t|x)p_d(x|x_t)dx\\
    &=\alpha^{-1}\int\left(\nabla_x\log p_d(x) - \nabla_x \log p_d(x|x_t)\right)p_d(x|x_t)dx\\
    &=\alpha^{-1}\int\nabla_x\log p_d(x)p_d(x|x_t)dx,
\end{align}
where the last equality follows since
\begin{align}
\int \nabla_{x}\log p_d(x|x_t)p_d(x|x_t) dx = \int \nabla_{x} p_d(x|x_t)dx =\nabla_{x}\int  p_d(x|x_t)dx = \nabla_{x} 1=0.
\end{align}
This completes the proof.
\end{proof}

\subsection{Derivation of Mixed Score Identity (MSI)}\label{appendix:derivation_msi}
\begin{manualprop}{\ref{prop:msm}}[\textbf{Mixed Score Identity}]
    Using a Gaussian convolution $k(x_t|x)=\mathcal{N}(x_t|\alpha_t x, \sigma_t^2I)$ with a variance-preserving (VP) scheme $\sigma_t^2=1-\alpha_t^2$, and a convex combination of TSI and DSI with coefficients $\alpha_t^2$ and $1-\alpha_t^2$, respectively, we have
\begin{equation}%
        \nabla_{x_t}\log p_d(x_t)=\int (\alpha_t(x+\nabla_{x} \log p_d(x))-x_t) p_d(x|x_t) dx.
\end{equation}
\end{manualprop}
\begin{proof}
    For a Gaussian convolution kernel $k(x_t|x)=\mathcal{N}(x_t|\alpha_t x, \sigma_t^2I)$, DSI becomes
    \begin{align}
        \nabla_{x_t}\log p_d(x_t)
        & =\int \nabla_{x_t}\log k(x_t|x) p_d(x|x_t) dx \\
        & = \int \nabla_{x_t} \left( -\frac{\lVert x_t - \alpha x \rVert^2}{2\sigma_t^2} \right)p_d(x|x_t)dx \\
        &= \int \left(\frac{\alpha x - x_t}{\sigma_t^2}\right) p_d(x|x_t)dx.
    \end{align}
    Since $\sigma_t^2=1-\alpha_t^2$, it then follows that
    \begin{align}
        \nabla_{x_t}\log p_d(x_t)
        & = \int \left( \alpha_t^2 \frac{\nabla_x \log p_d(x)}{\alpha} + (1-\alpha_t^2) \frac{\alpha_t x - x_t}{\sigma_t^2} \right)p_d(x|x_t)dx \\
        & = \int (\alpha_t(x+\nabla_{x} \log p_d(x))-x_t) p_d(x|x_t) dx.
    \end{align}
    This completes the proof.
\end{proof}

\section{PROOFS REGARDING INVARIANCES AND EQUIVARIANCES}\label{appendix:invariance_theorem}
\subsection{Proof of \Cref{prop:g_eqvar}}\label{appendix:neural_sampler_theorem}
We first prove \Cref{prop:g_eqvar} in the main text.
This proposition tells us for is a $G$-equivariant neural sampler which maps from latent space to sample space, if the latent space is invariant then the sample space is also invariant.
We restate the formal statement below.
\begin{manualprop}{\ref{prop:g_eqvar}}
 Let the neural sampler \( g_\theta: \mathcal{Z} \rightarrow \mathcal{X} \) be an $G$-equivariant mapping. 
    If the distribution $p(Z)$ over the latent space $\mathcal{Z}$ is $G$-invariant, then $p_{\theta}(X) = \int \delta(X -  g_\theta(Z)) p(Z) dZ$ is $G$-invariant.
\end{manualprop}
\begin{proof}
For a transformation $G$\footnote{We slightly abuse the notation by using $G$ to represent both the set $G = \text{E}(3) \times \mathbb{S}_n$ and its elements.}, we have
    \begin{align}
        p_{\theta}(G \circ X) 
        &= \int \delta(G \circ X-  g_\theta(Z)) p(Z)dZ\\
        &= \int  \delta(X -  g_\theta(G^{-1}\circ  Z)) p(G^{-1}\circ  Z) dZ\\
 &=  \int  \delta(X - g_\theta(Z')) p(Z') dZ \\
 &=\int  \delta(X -  g_\theta(Z')) p(Z') dZ' \\
 &= p_{\theta}(X),
    \end{align}
    where $Z'=G^{-1} \circ Z$, and the penultimate line follows since the transformation in $\text{E}(d) \times \mathbb{S}_n$ preserves the volume.
\end{proof}
\subsection{Monte Carlo Score Estimators are $G$-equivariant}
Recall that, in \Cref{sec:boltzmann}, we discussed that we need a $G$-equivariant MSI estimator for $\int (\alpha_t(X{+}\nabla \log p_d(X)){-}X_t) p_d(X|X_t) dx$, and we mentioned that this holds true for a broad class of estimators under mild conditions, including importance sampling and AIS estimators, with different choices of samplers for $p_d(X|X_t)$, such as MALA and HMC.
We now provide a detailed discussion below.
\par
Before tackling the equivariance, first recall that we embed the product group in $n\times d$ into an orthogonal group in $nd$-dimensional space: $\text{E}(d) \times \mathbb{S}_n \hookrightarrow \text{O}(nd)$ by constrain both $\mathcal{X}$ to be the subspace of $\mathbb{R}^{n\times d}$ with zero center of mass. 
Therefore, both $X$ and $X_t$ are zero-centered, i.e., $X^\top 1 = X_t^\top 1= 0$.
Additionally, the score $\nabla \log p_d(X)$ is also zero-centered.
The following Proposition provides a formal statement.
\begin{proposition}\label{prop:translation}
    Let $X \in \mathbb{R}^{n\times d}$ be a random variable representing a $d$-dimensional $n$-body system.
    The gradient of a translation invariant energy function $f$, i.e, $f(X) = f(X+ 1t^\top)$ is (1) translation invariant; and (2) 0-centered, i.e., $\nabla f(X)^\top  1 = 0$.
\end{proposition}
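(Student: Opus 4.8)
The plan is to prove both claims by directly differentiating the invariance identity $f(X) = f(X + 1 t^\top)$, so the only real work is keeping the matrix-calculus bookkeeping straight. Throughout, $1\in\mathbb{R}^n$ denotes the all-ones vector, so $1t^\top$ is the matrix that adds the same vector $t\in\mathbb{R}^d$ to every row of $X$.

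For part (1), I would introduce the translation map $\phi_t:\mathbb{R}^{n\times d}\to\mathbb{R}^{n\times d}$, $\phi_t(X) = X + 1 t^\top$, for a fixed $t\in\mathbb{R}^d$. The hypothesis is exactly $f\circ\phi_t = f$. Since $\phi_t$ is affine with linear part the identity, its differential $D\phi_t(X)$ is the identity operator on $\mathbb{R}^{n\times d}$ for every $X$ (in particular no Jacobian/volume factor appears). Applying the chain rule to $f\circ\phi_t = f$ then gives $\nabla f(X) = (D\phi_t(X))^\top \nabla f(\phi_t(X)) = \nabla f(X + 1 t^\top)$, which is precisely translation invariance of the gradient.

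For part (2), I would differentiate the identity $f(X + 1 t^\top) = f(X)$ with respect to the translation vector $t$. The right-hand side does not depend on $t$, so its $t$-gradient is zero. For the left-hand side, writing $Y = X + 1 t^\top$ so that $Y_{ij} = X_{ij} + t_j$, the chain rule gives $\nabla_t f(Y) = \big((\nabla f(Y))^\top 1\big)$, i.e. the vector of column sums of $\nabla f(Y)$; by part (1), $\nabla f(Y) = \nabla f(X)$, so we conclude $(\nabla f(X))^\top 1 = 0$. Equivalently, one can argue more elementarily: for any $v\in\mathbb{R}^d$ the function $s\mapsto f(X + s\,1 v^\top)$ is constant, hence its derivative at $s=0$, namely $\langle \nabla f(X),\, 1 v^\top\rangle_F = v^\top (\nabla f(X))^\top 1$, vanishes; since this holds for all $v\in\mathbb{R}^d$, we get $(\nabla f(X))^\top 1 = 0$.

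There is no genuine obstacle here — the argument is routine — so the only points to be careful about are conventional: that $1t^\top$ shifts all rows by the same $t$, that the shift has identity Jacobian (so part (1) carries no extra factors), and that "$0$-centered" means the column sums of $\nabla f(X)$ vanish, i.e. the center-of-mass component of the gradient is zero. I would close by noting the intended consequence: since $X$, $X_t$, and $\nabla\log p_d(X)$ are then all in the zero-CoM subspace, the MSI integrand $\alpha_t(X + \nabla\log p_d(X)) - X_t$ also lies in that subspace, which is what is used in \Cref{sec:boltzmann}.
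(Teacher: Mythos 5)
Your proof is correct and takes essentially the same route as the paper: part (1) is the identical chain-rule argument, and part (2) derives the same identity $t^\top \nabla f(X)^\top 1 = 0$ for all $t$ — you obtain it by differentiating in $t$ (equivalently, a directional derivative along $1v^\top$), whereas the paper matches first-order Taylor expansions, which is the same computation phrased more verbosely. If anything, your directional-derivative version of part (2) is the cleaner of the two.
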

\begin{proof}
(1) We first prove that the gradient is translation invariant.
Given the fact that the function is translation invariant:
    \begin{align}
        f(X) = f(X+1t^\top),
    \end{align}
taking gradient w.r.t. $X$ on both sides, we have
    \begin{align}
       \nabla_X  f(X) =  \nabla_X f(X+1t^\top).
    \end{align}
By chain rule, we also have
    \begin{align}
       \nabla_{X}
f(X+1t^\top) &=  \nabla_{X+1t^\top}
f(X+1t^\top)\nabla_X (X+1t^\top)= \nabla_{X+1t^\top}
f(X+1t^\top).
    \end{align}
But let  $X'=X+1t^\top$, we have
    \begin{align}
         \nabla_X f(X) = \nabla_{X'} 
f(X'),
    \end{align}
    which shows that the gradient of a translation invariant function is also translation invariant.
    \par
   (2) Now we show that the gradient is always $0$-centered.
   Applying first-order Taylor expansion at $X_0$ to both sides of $f(X) = f(X+1t^\top)$, we have
    \begin{align}
        &f(X_0)  + \text{tr}\left(\nabla f(X_0)(X - X_0)^\top\right) \\= &\underbrace{f(X_0+1t^\top)}_{=f(X_0)} +\text{tr}\left(\underbrace{\nabla f(X_0+1t^\top)}_{=\nabla f(X_0)}(X - X_0 - 1t^\top)^\top\right) .
    \end{align}
    Therefore, we have
    \begin{align}
        & \text{tr}\left(\nabla f(X_0)(X - X_0)^\top\right) = \text{tr}\left(\nabla f(X_0+1t^\top)(X - X_0 - 1t^\top)^\top\right). 
    \end{align}
    It then follows that
      \begin{align}
        &\text{tr}\left(\nabla f(X_0) t1^\top\right) = 0.
    \end{align}
   By the property of the trace operator, we have
    \begin{align}
         \text{tr}\left(t^\top \nabla f(X_0)^\top  1 \right) = t^\top \nabla f(X_0)^\top 1 = 0.
    \end{align}
  Note that the equation holds for any $t$. Therefore, we have
    \begin{align}
       \nabla f(X_0)^\top 1= 0.
    \end{align}
    This completes the proof.
\end{proof}
Therefore, the entire MSI: $\nabla_{X_t}\log p_d(X_t)=\int (\alpha_t(X{+}\nabla_X \log p_d(X)){-}X_t) p_d(X|X_t) dx$ is \textbf{(1) zero-centered} and \textbf{(2) $G$-equivariant w.r.t $X_t$}.
We now need to prove that the aforementioned MC estimators and samplers (IS/AIS with HMC/MALA) meet these requirements.

\subsubsection{Importance Sampling}
We begin with the simplest case, where we estimate $\int (\alpha_t(X{+}\nabla \log p_d(X)){-}X_t) p_d(X|X_t) dx$ with Importance Sampling (IS) with zero-centered Gaussian proposal, as employed in \citet{akhound2024iterated}.
Formally, we use the IS estimator (or more precisely, self-normalized IS estimator) as follows:
\begin{align}
   & \int (\alpha_t(X{+}\nabla \log p_d(X)){-}X_t) p_d(X|X_t) dx\\
    = &\alpha_t \frac{\int  (X{+}\nabla \log p_d(X) )p_d(X) \bar{\mathcal{N}}(X_t|\alpha_t X, \sigma_t^2)}{\int p_d(X) \bar{\mathcal{N}}(X_t|\alpha_t X, \sigma_t^2) dx} dx - X_t\\
   =  &\alpha_t \frac{\int  (X{+}\nabla \log p_d(X) )p_d(X) \bar{\mathcal{N}}(X| X_t/\alpha_t, \sigma_t^2/\alpha_t^2)}{\int p_d(X) \bar{\mathcal{N}}(X| X_t/\alpha_t, \sigma_t^2/\alpha_t^2) dx} dx - X_t\\
   \approx& \alpha_t \frac{\sum_{n} (X^{(n)}+\nabla \log p_d(X^{(n)}) ) p_d(X^{(n)}) }{\sum_n p_d(X^{(n)}) }- X_t, \quad X^{(1:N)} \sim \bar{\mathcal{N}}(X| X_t/\alpha_t, \sigma_t^2/\alpha_t^2).
\end{align}
Here, we slightly abuse the notation of the Gaussian distribution:  the random variable and its mean are both in matrix form, while the variance is a scalar. 
We use this notation to represent an isotropic Gaussian for matrices, i.e.,
$\mathcal{N}(X|Y, v) = \mathcal{N}(\text{vec}(X)|\text{vec}(Y), vI)$.
Unless otherwise specified, we will adhere to this notation throughout our proof.
We also use $\bar{\mathcal{N}}$ to denote Gaussian in the zero-centered subspace, i.e., \begin{align}
    \bar{\mathcal{N}}(X|\cdot) \propto \begin{cases}
        \mathcal{N}(X|\cdot), & \text{if } X^\top 1 = 0,\\
        0, & \text{otherwise.}
    \end{cases}
\end{align}
In other words, we draw samples from the proposal $\bar{\mathcal{N}}(X| X_t/\alpha_t, \sigma_t^2/\alpha_t^2)$, and target at $p_d(X) \bar{\mathcal{N}}(X_t|\alpha_t X, \sigma_t^2)$.
The importance weight is given by
\begin{align}
    w(X) = \frac{p_d(X) \bar{\mathcal{N}}(X_t|\alpha_t X, \sigma_t^2)}{\bar{\mathcal{N}}(X| X_t/\alpha_t, \sigma_t^2/\alpha_t^2) } \propto p_d(X).
\end{align}
\par
It is easy to check that this estimator is zero-centered.
We now prove that this estimator is $G$-equivariant.
Our proof follows \citet{akhound2024iterated} closely.
We mostly restate their proof here just for completeness.
\begin{proof}
Assume we apply some transformation $G$ to $X_t$, the estimator becomes
   \begin{align}
       &\alpha_t \frac{\sum_{n} (G \circ X^{(n)}+\nabla \log p_d(G \circ X^{(n)}) ) p_d(G \circ X^{(n)}) }{\sum_n p_d(G \circ X^{(n)}) }- G\circ X_t, \quad G \circ  X^{(1:N)} \sim \bar{\mathcal{N}}(G \circ X_t/\alpha_t, \sigma_t^2/\alpha_t^2)\\
       =&\alpha_t \frac{\sum_{n} (G \circ X^{(n)}+G \circ \nabla \log p_d(X^{(n)}) ) p_d(G \circ X^{(n)}) }{\sum_n p_d(G \circ X^{(n)}) }- G\circ X_t\\
       =& G\circ \alpha_t \frac{\sum_{n} ( X^{(n)}+\nabla \log p_d(X^{(n)}) ) p_d(X^{(n)}) }{\sum_n p_d(X^{(n)}) }- G\circ X_t\\
       =& G\circ \left(\alpha_t \frac{\sum_{n} ( X^{(n)}+\nabla \log p_d(X^{(n)}) ) p_d(X^{(n)}) }{\sum_n p_d(X^{(n)}) }-  X_t\right), \quad X^{(1:N)} \sim \bar{\mathcal{N}}(X_t/\alpha_t, \sigma_t^2/\alpha_t^2).
   \end{align}
   The last line follows since $G \circ  X^{(1:N)} \sim \bar{\mathcal{N}}(G \circ X_t/\alpha_t, \sigma_t^2/\alpha_t^2)$ is equivalent to $=X^{(1:N)} \sim \bar{\mathcal{N}}(X_t/\alpha_t, \sigma_t^2/\alpha_t^2)$.
\end{proof}
\subsubsection{Sampling Importance Resampling}\label{sec:sir}
Instead of estimating the integral by IS, we can also perform Sampling Importance Resampling (SIR) using the importance weight.
Specifically, we can draw one sample $X^*$ from the Categorical distribution according to the IS weights:
\begin{align}\label{eq:SIR}
&X^* = X^{(n^*)},\nonumber\\
  \text{where }  &n^* \sim \text{Cat}\left( 
   \frac{p_d(X^{(1)})}{\sum_{n} p_d(X^{(n)})}, \frac{p_d(X^{(2)})}{\sum_{n} p_d(X^{(n)})}, \cdots, \frac{p_d(X^{(N)})}{\sum_{n} p_d(X^{(n)})}
    \right), \\ \text{and }&X^{(1:N)} \sim \bar{\mathcal{N}}(X_t/\alpha_t, \sigma_t^2/\alpha_t^2).\nonumber
\end{align}
The sample obtained by SIR is $G$-equivariant to $X_t$.
\begin{proof}
If we apply $G$ to $X_t$, SIR becomes
\begin{align}
&{X^*}' = G \circ{X^{(n^*)}},\nonumber\\
    \text{where }&n^* \sim \text{Cat}\left( 
   \frac{p_d(G\circ X^{(1)})}{\sum_{n} p_d(G\circ X^{(n)})}, \frac{p_d(G\circ X^{(2)})}{\sum_{n} p_d(G\circ X^{(n)})}, \cdots, \frac{p_d(G\circ X^{(N)})}{\sum_{n} p_d(G\circ X^{(n)})}
    \right), \\\text{and }&  G \circ {X^{(1:N)}} \sim \bar{\mathcal{N}}(G \circ X_t/\alpha_t, \sigma_t^2/\alpha_t^2).\nonumber
\end{align}
Since the target density is $G$-invariant and $G \circ {X^{(1:N)}} \sim \bar{\mathcal{N}}(G \circ X_t/\alpha_t, \sigma_t^2/\alpha_t^2)$ is equivalent to ${X^{(1:N)}} \sim \bar{\mathcal{N}}( X_t/\alpha_t, \sigma_t^2/\alpha_t^2)$, we have
\begin{align}\label{eq:SIR_G}
&{X^*}' = G \circ{X^{(n^*)}},\nonumber\\
    \text{where }&n^* \sim \text{Cat}\left( 
   \frac{p_d( X^{(1)})}{\sum_{n} p_d( X^{(n)})}, \frac{p_d( X^{(2)})}{\sum_{n} p_d( X^{(n)})}, \cdots, \frac{p_d( X^{(N)})}{\sum_{n} p_d( X^{(n)})}
    \right), \\\text{and }&  {X^{(1:N)}} \sim \bar{\mathcal{N}}( X_t/\alpha_t, \sigma_t^2/\alpha_t^2).\nonumber
\end{align}
Comparing \Cref{eq:SIR,eq:SIR_G}, we conclude ${X^*}' = G\circ X^*$, and hence the sample obtained by SIR is $G$-equivariant to $X_t$.
\end{proof}
Additionally, the score at the sample obtained by SIR is also  $G$-equivariant to $X_t$.
\subsubsection{Hamiltonian Monte Carlo}\label{sec:hmc_se3}
We now look at more complicated cases, where we run Hamiltonian Monte Carlo (HMC) or Langevin Dynamics (LG, including ULA and MALA) to obtain samples from $p_d(X|X_t)$.
We start with HMC in this section and look at LG in the next section.
Our conclusion will require the following two assumptions:
\begin{itemize}
    \item Assumption 1: the initial guess in HMC is zero-centered and $G$-equivariant w.r.t. to $X_t$.
This is a reasonable assumption, as the initial guess can simply be a sample from $\bar{\mathcal{N}}( X_t/\alpha_t, \sigma_t^2/\alpha_t^2)$, or can be a sample from Sampling Importance Resampling.
\item Assumption 2: the momentum variable in HMC follows zero-centered and $G$-invariant Gaussian, which also most holds true since the most common choice is standard or isotropic Gaussian. 
\end{itemize}
Under these assumptions, the samples obtained by HMC are zero-centered and $G$-equivariant to $X_t$.
In the following, we prove that this holds for the first step.
The other steps can be simply proved by viewing the sample from the previous guess as the initial guess.
\begin{proof}
   \textbf{(1) Zero-centered.}
We first note that the gradient of $\log p_d(X|X_t)$ is zero-centered if both $X$ and $X_t$ are zero-centered.
This is easy to check:
\begin{align}\label{eq:zero_centered}
    \nabla_X \log p_d(X|X_t) =  \underbrace{\nabla_X \log p_d(X)}_{\text{zero-centered by \Cref{prop:translation}}} +  \underbrace{\nabla_X \log \bar{\mathcal{N}}(X_t|X) }_{\propto X-X_t}.
\end{align} 
We now look at the HMC transition kernel (with frog-leap):
\begin{align}
\begin{split}
    &P \sim \bar{\mathcal{N}}(0, m),\\
    &P_{t/2} \leftarrow P + \underbrace{\frac{t}{2} \nabla_X \log p_d(X|X_t)}_{\text{zero-centered by \Cref{eq:zero_centered}}},  \\
    &X' \leftarrow X + t P_{t/2},\\
    &P' \leftarrow P_{t/2}+ \underbrace{\frac{t}{2} \nabla_{X'} \log p_d(X'|X_t)}_{\text{zero-centered by \Cref{eq:zero_centered}}}.
\end{split}
\end{align}
This proposed $X'$ will be accepted according to the Metropolis-Hastings criterion:
\begin{align}
    \alpha = \min\left\{1, \frac{\bar{\mathcal{N}}(P'|0, m)p_d(X'|X_t)}{\bar{\mathcal{N}}(P|0, m) p_d(X|X_t)}\right\}= \min\left\{1, \frac{\bar{\mathcal{N}}(P'|0, m)p_d(X')\bar{\mathcal{N}}(X_t|X')}{\bar{\mathcal{N}}(P|0, m) p_d(X)\bar{\mathcal{N}}(X_t|X)}\right\}.
\end{align}
Since $P$ is zero-centered by Assumption 2, all operations will maintain the zero-centered property.
\par
\textbf{(2) $G$-equivariant. }
It is easy to check $\alpha$ is $G$-invariant.
We, therefore, only focus on proving the frog-leap step is $G$-equivariance.
We simplify the frog-leap steps involving $X$ as:
\begin{align}
    X' \leftarrow X + t
P + \frac{t}{2} \nabla \log p_d( X|X_t), \text{ where }  P  \sim \bar{\mathcal{N}}( 0, m).
\end{align}
Notice that, after applying $G$ to $X$ and hence $X_t$ (by Assumption 1 that the initial guess of $X$ is $G$-equivariant to $X_t$), the frog-leap steps involving $X$ become:
\begin{align}
    X'' &\leftarrow G \circ X + t \left(
  G \circ  P + \frac{t}{2} \underbrace{\nabla \log p_d(G \circ X|G \circ X_t)}_{=G \circ \nabla \log p_d( X| X_t)}\right), \text{ where } G \circ P \sim \bar{\mathcal{N}}(0, m)\\
  &=G \circ  \left(X + t
P + \frac{t}{2} \nabla \log p_d( X|X_t)\right), \text{where }  P \sim \bar{\mathcal{N}}(G^{-1} \circ 0, M) = \bar{\mathcal{N}}( 0, m)\\
&=G \circ X'.
\end{align}
This completes the proof.
\end{proof}

\subsubsection{Langevin Dynamics}\label{sec:ld_se3}
We now look at LG (i.e., ULA and MALA).
By the same argument as HMC, the Metropolis-Hastings acceptance is $G$-invariant and will not influence our conclusions. 
We, therefore, simply look at a single step in ULA but note that the same holds for MALA as well.
We also take the two assumptions made in HMC with a small modification:
\begin{itemize}
    \item Assumption 1: the initial guess in LG is zero-centered and $G$-equivariant w.r.t. to $X_t$.
\item Assumption 2: the Brownian motion we take in LG is izero-centered. 
\end{itemize}
Under these assumptions, the samples obtained by LG are zero-centered and $G$-equivariant to $X_t$.
\begin{proof}
    \textbf{(1) Zero-centeredness. }This is trivial by the LG updating formula: 
     \begin{align}
       X' \leftarrow X +  \gamma  \underbrace{\nabla_X \log p_d(X|X_t)}_{\text{zero-centered by \Cref{eq:zero_centered}}} 
 + \sqrt{2\gamma} \mathcal{E},
    \end{align}
where $\mathcal{E}$ is a matrix of standard Gaussian noise in the zero-centered subspace, i.e., $\mathcal{E}\sim \bar{\mathcal{N}}(0, 1)$. 
\par
\textbf{(2) $G$-equivariance. }
Applying $G$ to both $X_t$ and $X$, and noticing that the Gaussian distribution over $\mathcal{E}$ is $G$-invariant, we obtain the new LG updating formula: 
\begin{align}
       X'' &\leftarrow G \circ X +  \gamma  \underbrace{\nabla \log p_d(G \circ X|G \circ X_t)}_{=G \circ \nabla \log p_d( X| X_t)}  + \sqrt{2\gamma} \mathcal{E}, \quad \mathcal{E}\sim \bar{\mathcal{N}}(0, 1)\\
       &= G \circ \left( X +  \gamma  \nabla \log p_d( X|X_t)  + \sqrt{2\gamma} \mathcal{E}\right), \quad \mathcal{E}\sim \bar{\mathcal{N}}(G^{-1} \circ 0, 1) = \bar{\mathcal{N}}(0, 1)\\
       &=G \circ X'.
\end{align}
This completes the proof.
\end{proof}

\subsubsection{Annealed Importance Sampling}
Another possible choice for the score estimator is Annealed Importance Sampling \citep[AIS, ][]{neal2001annealed} or AIS followed by importance resampling.
Recall that in IS, we draw samples from the proposal $\bar{\mathcal{N}}(X| X_t/\alpha_t, \sigma_t^2/\alpha_t^2)$, and the target as given by $p_d(X) \bar{\mathcal{N}}(X_t|\alpha_t X, \sigma_t^2)$.
For AIS, we introduce a sequence of intermediate distributions that interpolate between the proposal and the target: 
\begin{align}
    \pi_{(k)}(X) &\propto \left(\underbrace{\bar{\mathcal{N}}(X| X_t/\alpha_t, \sigma_t^2/\alpha_t^2)}_{\text{proposal}}\right)^{1-\beta_k} \left(\underbrace{p_d(X) \bar{\mathcal{N}}(X_t|\alpha_t X, \sigma_t^2)}_{\text{target}}\right)^{\beta_k} \\&\propto \left(p_d(X)\right)^{\beta_k} \bar{\mathcal{N}}(X_t|\alpha_t X, \sigma_t^2).
\end{align}
where $\beta_0=0$ and $\beta_K=1$.
AIS follows an iterative process:
The AIS algorithm proceeds iteratively as follows: \begin{itemize} 
\item Draw $X_{(0)} \sim \pi_{(0)}$; 
\item For $k = 1, 2, \dots, K-1$, run a MCMC using a transition kernel $T(X_{(k)}|X_{(k-1)})$, with $\pi_{(k)}$ as the stationary distribution, to obtain $X_{(k)}$. 
\end{itemize}
In the end, we calculate the IS weight in the joint space defined over $X_{(1:K-1)}$.
This will yield the AIS weight:
\begin{align}
    w_\text{AIS}(X_{(1:K-1)}) &= \frac{\pi_{(1)}(X_{(0)})}{\pi_{(0)}(X_{(0)})}\frac{\pi_{(2)}(X_{(1)})}{\pi_{(1)}(X_{(1)})}\cdots \frac{\pi_{(K)}(X_{(K-1)})}{\pi_{(K-1)}(X_{(K-1)})}\\
    &= \prod_{k=1}^K \left(p_d(X_{(k-1)})\right)^{\beta_k - \beta_{k-1}}.
\end{align}
Therefore, MSI can be estimated by
\begin{align}
   & \int (\alpha_t(X{+}\nabla \log p_d(X)){-}X_t) p_d(X|X_t) dx\\
   \approx& \alpha_t \frac{\sum_{n} (X^{(n)}+\nabla \log p_d(X^{(n)}) ) w_\text{AIS}(X_{(1:K-1)}^{(n)}) }{\sum_n  w_\text{AIS}(X_{(1:K-1)}^{(n)}) }- X_t, \quad X_{(1:K-1)}^{(1:N)} \sim \text{AIS}.
\end{align}
We use superscripts to represent the sample index and subscripts to denote the intermediate step index in AIS.
\par
Assume we use HMC or LG with the assumptions discussed in \Cref{sec:hmc_se3,sec:ld_se3} as the transition kernel in AIS.
The sequence of samples we obtain ($X_{(1:K-1)}^{(1:N)}$) will be $G$-equivariant w.r.t. $X_t$.
Additionally, notice that $\nabla \log p_d(X^{(n)}))$ is $G$-invariance to $X^{(n)}$ and $w_\text{AIS}$ is $G$-invariance as $p_d$ is $G$-invariance.
We can conclude that the AIS estimator for MSI is $G$-equivariant w.r.t. $X_t$.
\par
Following the same argument as in \Cref{sec:sir}, if we perform AIS followed by a resampling, the result will also be  $G$-equivariant w.r.t. $X_t$.

\section{EXPERIMENT DETAILS}\label{appendix:experiment_setup}

\subsection{Mixture of 40 Gaussians (MoG-40)}

We employ the mixture of 40 Gaussians (MoG-40) target distribution in 2D proposed in \citet{midgley2023flow} to visually examine the mass-covering property of different models. 
We train all methods for 2.5h, which allows all of them to converge.

For our approach, we choose the total number of diffusion steps $T=30$. 
We use a variance-preserving (VP) scheme \citep{ho2020denoising} and a linear schedule with $\beta_t$ ranging from $10^{-4}$ to $0.7$.
We choose the weighting function to be $w(t)=1/\alpha_t$. 
For the score network $s_\phi(x_t)$, we use a 5-layer MLP with hidden dimension $400$ and SiLU activation.
In each inner loop, we use Adam to train the score network $s_\phi(x_t)$ for 50 iterations using DSM with learning rate $10^{-4}$ and batch size $1,024$.
For the neural sampler $g_{\theta}(z)$, we use a 5-layer MLP with latent dimension $2$, hidden dimension $400$ and SiLU activation.
We use Adam to train the neural sampler $g_{\theta}(z)$ using MSI with learning rate $10^{-3}$, batch size $1,024$, and gradient norm clip $10.0$.
Regarding posterior sampling for \Cref{eq:posterior}, we use AIS with 10 importance samples, 15 AIS steps.
For each AIS step, we use HMC transition kernel with 1 frog-leap step and step size $1.0$. 
We resample one of those 10 AIS samples according to the AIS weights and use that sample as initialization for 5 steps of MALA with step size $10^{-2}$.
In the end, we estimate the MSI using this single sample (i.e., we perform Monte Carlo estimation with one sample).

For R-KL-based approaches, we align their experiment setups with that for our approach. 
Specifically, we use a 5-layer MLP with hidden dimension $400$ and SiLU activation for the score network $s_\phi(x_t)$.
In each inner loop, we use Adam to train the score network $s_\phi(x_t)$ for 50 iterations using DSM with learning rate $10^{-4}$ and batch size $1,024$.
For the neural sampler $g_{\theta}(z)$, we use a 5-layer MLP with latent dimension $2$, hidden dimension $400$ and SiLU activation.
We use Adam to train the neural sampler $g_{\theta}(z)$ with learning rate $10^{-3}$, batch size $1,024$, and gradient norm clip $10.0$.

For FAB \citep{midgley2023flow} and iDEM \citep{akhound2024iterated}, we use exactly the same setups as described in the respective papers. Note that both of them use replay buffers. In addition, all methods except iDEM work under the original scale $[-50, 50]$ of the target. iDEM normalizes the target to the range $[-1, 1]$, which may simplify the task.

\subsection{Many-Well-32 (MW-32, Internal Coordinate)}

We employ the Many-Well target distribution in 32D proposed in \citet{midgley2023flow} to examine the mass-covering property of different models, as this target contains $2^{16}$ modes. We train all models until convergence. Training and sampling time for each model can be found in \Cref{tab:time} in the main text.

For our approach, we choose the total number of diffusion steps  $T=30$. 
We use a variance-preserving (VP) scheme \citep{ho2020denoising} and a linear schedule with $\beta_t$ ranging from $10^{-4}$ to $0.15$.
We choose the weighting function to be $w(t)=1/\alpha_t$. 
For the score network $s_\phi(x_t)$, we use a 5-layer MLP with hidden dimension $400$ and SiLU activation.
In each inner loop, we use Adam to train the score network $s_\phi(x_t)$ for 50 iterations using DSM with learning rate $10^{-4}$ and batch size $1,024$.
For the neural sampler $g_{\theta}(z)$, we use a 5-layer MLP with latent dimension $32$, hidden dimension $400$ and SiLU activation.
We use Adam to train the neural sampler $g_{\theta}(z)$ using MSI with learning rate $10^{-3}$, batch size $1,024$, and gradient norm clip $10.0$.
Regarding posterior sampling for \Cref{eq:posterior}, we use AIS with 10 importance samples, 15 AIS steps.
For each AIS step, we use HMC transition kernel with 1 frog-leap step and step size $0.3$. 
We resample one of those 10 AIS samples according to the AIS weights and use that sample as initialization for 5 steps of MALA with step size $5\times 10^{-2}$.
In the end, we estimate the MSI using this single sample.

For R-KL-based approaches, we align their experiment setups with that for our approach. 
Specifically, we use a 5-layer MLP with hidden dimension $400$ and SiLU activation for the score network $s_\phi(x_t)$.
In each inner loop, we use Adam to train the score network $s_\phi(x_t)$ for 50 iterations using DSM with learning rate $10^{-4}$ and batch size $1,024$.
For the neural sampler $g_{\theta}(z)$, we use a 5-layer MLP with latent dimension $32$, hidden dimension $400$ and SiLU activation.
We use Adam to train the neural sampler $g_{\theta}(z)$ with learning rate $10^{-3}$, batch size $1,024$, and gradient norm clip $10.0$.

For FAB, we use exactly the same setup as described in \citet{midgley2023flow}. 
For iDEM, we use the same setup as that for experiments in internal coordinates as described in \citep{akhound2024iterated} but change the maximum score norm clip threshold to $1,000$, increase the number of MC samples to $1,000$, and reduce $\sigma_{max}$ in the noise schedule to $1.0$. Note that both of FAB and iDEM use replay buffers.

\subsection{Double-Well-4 (DW-4, Cartesian Coordinate)}

We employ the Double-Well target with 4 particles in 2D.
This target was originally introduced by \citet{kohler2020equivariant} and also used in \citet{midgley2024se,akhound2024iterated} to evaluate model performance on invariant targets.
We train all models until convergence. 
Training and sampling time for each model can be found in \Cref{tab:time} in the main text.
\par
For our approach, we choose the total number of diffusion steps $T=30$. 
We use a variance-preserving (VP) scheme \citep{ho2020denoising} and a linear schedule with $\beta_t$ ranging from $10^{-6}$ to $0.05$.
We found that using a constant weighting function, 
$w(t)=1$, is beneficial for handling these complex targets.
We use EGNN following \citet{hoogeboom2022equivariant} for both the score network $s_\phi$ and the neural sampler $g_\theta$.
The neural sampler has  8 layers with a hidden dimension of 144 and ReLU activation.
The score network has 4 layers with the same width, and it is additionally conditioned on $t$.
We use Adam to train the score network $s_\phi$ for 100 iterations using DSM with learning rate $10^{-4}$ and batch size $1024$.
We use Adam to train the neural sampler $g_{\theta}(z)$ using MSI with learning rate $5\times 10^{-4}$, batch size $1024$, and gradient norm clip $10.0$.
Regarding posterior sampling for \Cref{eq:posterior}, we use AIS with 20 importance samples, 10 AIS steps.
For each AIS step, we use 1-step MALA transition kernel with step size $0.01$. 
We resample one of those 20 AIS samples according to the AIS weights and use that sample as initialization for 50 steps of MALA.
We dynamically adjust the MALA step size to maintain an acceptance rate between 0.5 and 0.6. 
Specifically, we increase the step size by a factor of 1.5 when the acceptance rate exceeds 0.6 and decrease it by a factor of 1.5 when the acceptance rate drops below 0.5.
In the end, we estimate the MSI using this single sample.
\par
Additionally, we employ early stopping during training.
Specifically, we generate 2,000 samples using the neural sampler, which serve as the \emph{predictions}. 
These samples are then used as the initialization for 50 MALA steps, targeting the target energy. 
The 2,000 samples obtained after MALA are treated as the \emph{validation} set. 
We evaluate the energy of both the predictions and the validation set, then calculate the total variation (TV) distances between their energy histograms. 
We save the model with the lowest TVD.
This criterion can be interpreted as asking: 
how much improvement can be achieved by applying a small number of Langevin dynamics to the model samples? 
The less improvement we can achieve, the better the model is.
\par
For FAB \citep{midgley2023flow} and iDEM \citep{akhound2024iterated}, we use exactly the same setups as described in the respective papers. Note that both of them use replay buffers.

\subsection{Lennard-Johns-13 (LJ-13, Cartesian Coordinate)}
We employ the Lennard-Jones target with 13 particles in 3D, as introduced by \citet{kohler2020equivariant} and later used in \citet{midgley2024se, akhound2024iterated} to assess model performance on invariant targets.
This target is more complex than DW-4 in the sense that its energy landscape includes prohibitive regions that can destabilize training. 
However, DW-4 poses its own challenges, as it has two modes, and balancing these modes can be more difficult than handling the Lennard-Jones target.
We therefore evaluate our approach and baselines on both to evaluate its behavior comprehensively.
We train all models until convergence. 
Training and sampling time for each model can be found in \Cref{tab:time} in the main text.
\par
For our approach, we choose the total number of diffusion steps $T=30$. 
We use a variance-preserving (VP) scheme \citep{ho2020denoising} and a linear schedule with $\beta_t$ ranging from $10^{-6}$ to $0.05$.
We also use a constant weighting function, 
$w(t)=1$.
Both the score network $s_\phi$ and the neural sampler network $g_\theta$ share the same 8-layer architecture with a hidden dimension of 192 and ReLU activation. 
We use Adam to train the score network $s_\phi$ for 100 iterations using DSM with learning rate $10^{-4}$ and batch size $256$.
We use Adam to train the neural sampler $g_{\theta}$ using MSI with learning rate $5\times 10^{-4}$, batch size $256$, and gradient norm clip $10.0$.
Regarding posterior sampling for \Cref{eq:posterior}, we use IS with 500 importance samples.
We then resample one of those 20 AIS samples according to the AIS weights and use that sample as initialization for 1,000 steps of MALA.
We also dynamically adjust the MALA step size to maintain an acceptance rate between 0.5 and 0.6. 
Specifically, we increase the step size by a factor of 1.5 when the acceptance rate exceeds 0.6 and decrease it by a factor of 1.5 when the acceptance rate drops below 0.5.
Unlike previous tasks, we found that using only the last sample from MALA sometimes leads to suboptimal performance. 
To improve stability, we track the samples and their gradients from the last 500 steps of MALA, and estimate the MSI using these 500 samples. 
It's important to note that since the samples and their scores are already computed during MALA, using more samples in the Monte Carlo estimator does not incur any additional computational cost.
We found smoothing the LJ target following \citet{moore2024computing} can help to stabilize the training.
However, our approach works well even without this smoothing.
Additionally, we employ early stopping in the same way as in DW-4.
\par
For FAB \citep{midgley2023flow} and iDEM \citep{akhound2024iterated}, we use exactly the same setups as described in the respective papers. Note that both of them use replay buffers.

\subsection{Summary and Guidance for Hyperparameter Tuning in DiKL}
Below, we summarize some key hyperparameters for our method:
\par
(1) $\beta_t$ should not be too large, as this can lead to inaccurate posterior sampling and worse performance.
The neural sampling tends to favor the mean of the global mass, which can be seen in \Cref{fig:vis_why_drKL_works} in the main text: as the noise level increases, the model is biased towards the mean.
\par
(2) For DW and LJ tasks, it is important to use a large EGNN for the neural sampler. 
Unlike diffusion models with EGNNs \citep{akhound2024iterated, hoogeboom2022equivariant}, our approach involves learning a one-step sampling generator, requiring greater model capacity.
 We tested smaller networks, such as an EGNN with 6 layers and 128 hidden dimensions. 
 However, these yielded worse performance compared to the larger architecture used in our experiments.
On the other hand, the score network does not need to have the same capacity.
 For example, for DW4, we found a shallower one that suffices for optimal performance.
 Additionally, interestingly, we found using ReLU yields better performance than SiLU in EGNNs.
\par
(3) The weight function $w(t)$  also requires careful consideration.
While other weighting functions commonly used in diffusion models include $\sigma_t^2/\alpha_t$ or $\sigma_t^2/\alpha_t^2$, we found using $1/\alpha_t$ or uniform weighting is more stable in our approach.
An empirical guideline for choosing between these is as follows: 
for more complex targets like DW and LJ, a uniform weighting function can better at encouraging exploitation. 
On the other hand, for highly multi-modal targets, using 
$1/\alpha_t$ can accelerate exploration.
\par
(4) The batch size cannot be too small. Empirically, we found that training could be unstable if  a small batch size is used. In general, we recommend using a large batch size like 1,024 if it fits into the GPU memory.
\par
On the other hand, perhaps surprisingly, it is not crucial to have a perfect posterior sampler.
Our method essentially works in a bootstrapping manner: the posterior samples improve the model, and a better model in turn brings the posterior samples closer to the true target. 
Having said that, accurate posterior sampling may improve the convergence rate of the model.

\section{ADDITIONAL EXPERIMENTS AND RESULTS}
\subsection{Results with Different Random Seeds}
Our training process can be subject to randomness. 
Therefore, in this appendix, we provide results obtained with different random seeds in \Cref{fig:seeds}.
 As we can see, different seeds can have different converge rates and also slightly influence the final performance (for example, the green line in LJ-13 achieves lower TVD than others).
    However, we observe that 
    (1) a longer training process consistently yields better and more stable results, regardless of the seed; 
    (2) at the end of training, performance may still exhibit slight fluctuations, making the early stopping we employed necessary;
    and (3) while different seeds introduce minor variations in final performance, these discrepancies are negligible and smaller than the observed fluctuations.

\begin{figure}
    \centering
    \begin{subfigure}{0.49\textwidth}
         \includegraphics[width=\linewidth]{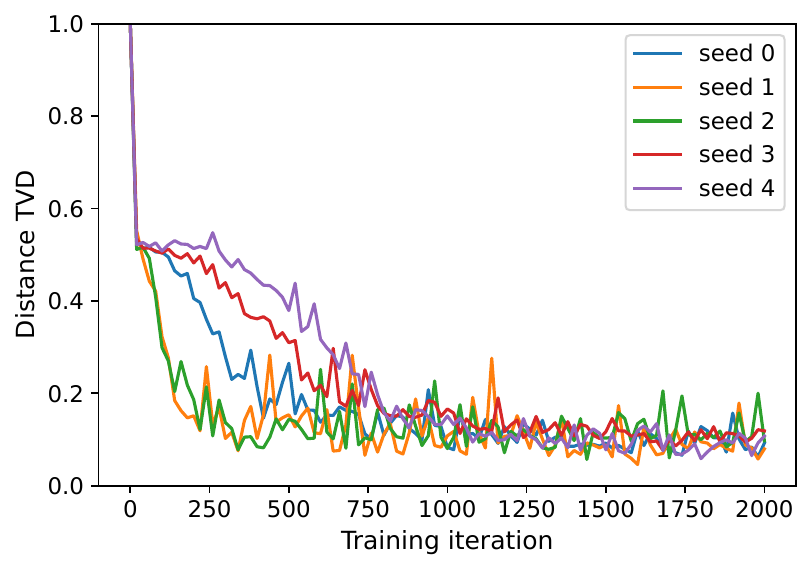}
    \caption{DW-4}
    \end{subfigure}
    \begin{subfigure}{0.49\textwidth}
         \includegraphics[width=\linewidth]{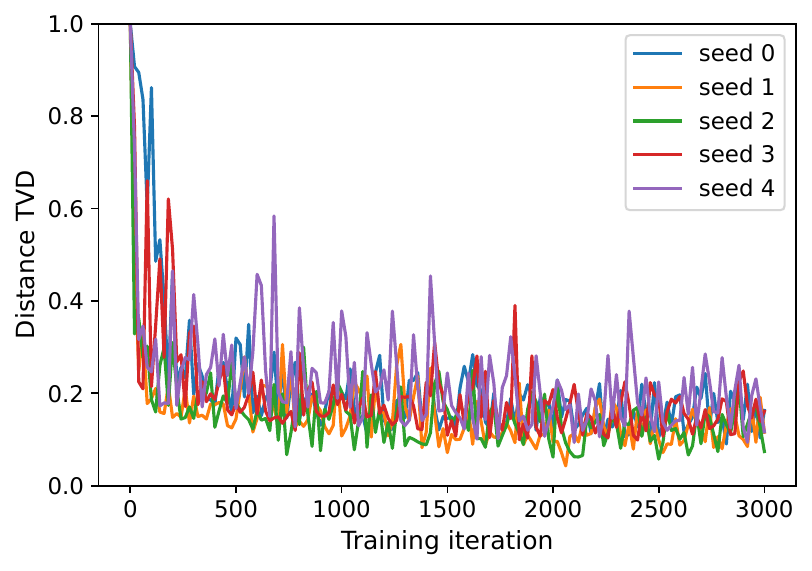}
    \caption{LJ-13}
    \end{subfigure}
    \caption{Distance TVD along the training process with different seeds. 
   }\label{fig:seeds}
\end{figure}

\subsection{Additionally Illustrations of \Cref{fig:vis_why_drKL_works}}
As suggested by anonymous reviewers, in \Cref{fig:vis_why_drKL_works2}, we provide an additional visualization of \Cref{fig:vis_why_drKL_works}, illustrating DiKL alongside KL divergence at different noise levels. This visualization highlights how DiKL balances model-seeking and mode-covering.
This visualization also addresses a potential concern: KL divergence at high noise levels can make the loss landscape flat, potentially hindering optimization. 
We observe that this is indeed the case when the noise level is large (e.g., 5.0). 
However, by summing over all noise levels in DiKL, this issue is mitigated, resulting in a smoother yet non-flat landscape that is easier to optimize.

However, it is important to acknowledge the limitation of this visualization: we use a Gaussian model to fit a two-mode Gaussian Mixture Model (GMM). As a result, even at optimal performance, the model cannot perfectly fit the target distribution. Therefore, when interpreting this plot, the focus should be on whether the model tends to converge to a single local mode or achieves a global coverage of both modes.

\begin{figure}[H]
    \centering
    \includegraphics[width=\linewidth]{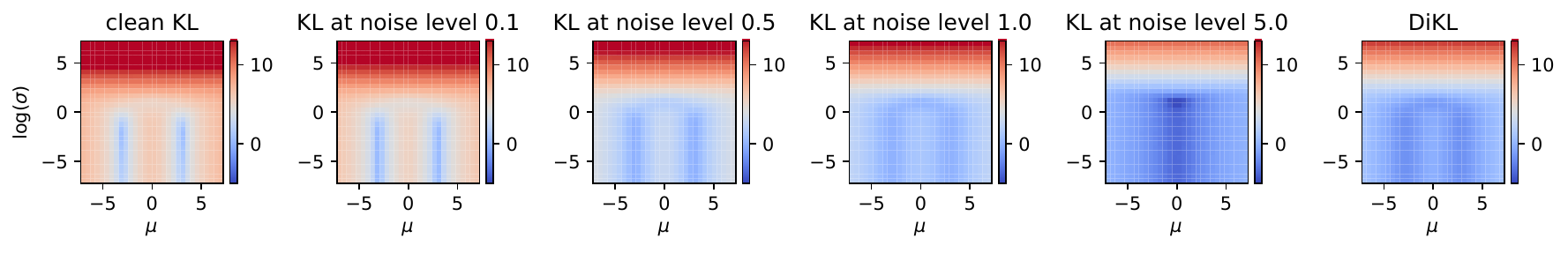}
    \caption{Heatmap of (log scale) KL divergence at different noise levels between a Gaussian model (with mean parameter $\mu$ and standard deviation parameter $\sigma$) and a two-mode MoG target in 1D. In  the last plot, we show the landscape of DiKL, where we sum over the KL divergence on all noise levels.
  }
    \label{fig:vis_why_drKL_works2}
\end{figure}

\subsection{Results on LJ-55}\label{appendix:lj55}

As we discussed in the main text, although our one-step generator $g_{\theta}(z)$ offers significantly faster sampling compared to diffusion-based samplers, it has limited model flexibility relative to multi-step diffusion models such as iDEM. This limitation becomes more pronounced when the target distribution is complex and has a larger Lipschitz constant.
To have a comprehensive evaluation of the our approach's weakness, we present the results of DiKL on the LJ-55 potential well in \Cref{fig:lj55}.
A promising direction for future work is to combine DiKL with multi-step samplers. 
One possible approach is to introduce several interpolants between the prior and the target distribution and train DiKL sequentially to transport between adjacent interpolants. 
We leave this for future exploration.

\begin{figure}[H]
    \centering
    \includegraphics[width=0.3\linewidth]{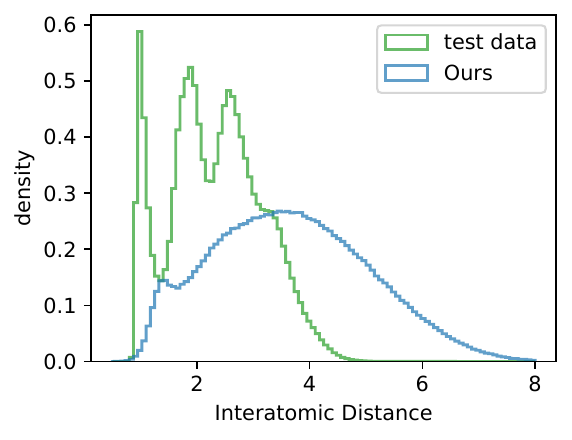}
    \caption{Performance on complex system LJ-55. }
    \label{fig:lj55}
\end{figure}

\section{ASSETS AND LICENSES}
We use the following codebases for baselines and benchmarks in our experiments:
\begin{itemize}
    \item FAB (MIT license): PyTorch implementation for MoG-40 and MW-32 (\url{https://github.com/lollcat/fab-torch})
    ; JAX implementation for DW-4 and LJ-13 (\url{https://github.com/lollcat/se3-augmented-coupling-flows}).
    \item iDEM (MIT license): PyTorch implementation for all experiments (\url{https://github.com/jarridrb/DEM}).
    \item DW-4 and LJ-13 target energy functions (MIT license): bgflow (\url{https://github.com/noegroup/bgflow}).
\end{itemize}

\end{document}